\documentclass[journal, twoside]{IEEEtran}

% SIAM Shared Information Template
% This is information that is shared between the main document and any
% supplement. If no supplement is required, then this information can
% be included directly in the main document.

%%% Amir commands begin %%%
\usepackage{microtype}
\usepackage{wrapfig}
\usepackage{stfloats}
\usepackage{moresize}
\usepackage{graphicx}
\usepackage{booktabs} % for professional tables
\usepackage[utf8]{inputenc}
\usepackage{hyperref}
\usepackage{makecell}
\usepackage{url}
\usepackage{caption,subcaption}
\usepackage{array}
\usepackage[rflt]{floatflt}
\usepackage{epsfig,times,graphicx,color}
\usepackage{amsmath}
\usepackage{amssymb, amsopn,algorithm,algorithmic,float,bbm,bm,enumerate,color,multirow}
\usepackage{rotating}
\usepackage{array}
\usepackage{multirow}
\usepackage{hhline}
\usepackage{xspace}
\usepackage{microtype}
\usepackage{graphicx}
\usepackage{booktabs} % for professional tables
\usepackage[utf8]{inputenc} % allow utf-8 input
\usepackage[T1]{fontenc}    % use 8-bit T1 fonts
\usepackage{hyperref}       % hyperlinks
\usepackage{url}            % simple URL typesetting
\usepackage{booktabs}       % professional-quality tables
\usepackage{amsfonts}       % blackboard math symbols
\usepackage{nicefrac}       % compact symbols for 1/2, etc.
\usepackage{microtype}      % microtypography
\usepackage{enumitem}
\usepackage{calc}

\newcommand{\dc}{{\sc Dasher}}
\def \ds{{\sc Dashin}}
%%% Amir commands end %%%

% Packages and macros go here
\usepackage{lipsum}
\usepackage{amsfonts}
\usepackage{graphicx}
\usepackage{epstopdf}
\usepackage{algorithmic}
%\ifpdf
%  \DeclareGraphicsExtensions{.eps,.pdf,.png,.jpg}
%\else
%  \DeclareGraphicsExtensions{.eps}
%\fi

% Prevent itemized lists from running into the left margin inside theorems and proofs
\usepackage{enumitem}
%\setlist[enumerate]{leftmargin=.5in}
%\setlist[itemize]{leftmargin=.5in}
%
%% Add a serial/Oxford comma by default.
%\newcommand{\creflastconjunction}{, and~}

% Used for creating new theorem and remark environments
%\newsiamremark{remark}{Remark}
%\newsiamremark{example}{Example}
%\newsiamremark{hypothesis}{Hypothesis}
%\crefname{hypothesis}{Hypothesis}{Hypotheses}
%\newsiamthm{prop}{Proposition}
%\newsiamthm{claim}{Claim}

% Sets running headers as well as PDF title and authors
%\headers{Data Sharing}{A. Asiaee, S. Oymak, K. R. Coombes, and A. Banerjee}

% Title. If the supplement option is on, then "Supplementary Material"
% is automatically inserted before the title.
%\title{High Dimensional Data Sharing: \\ Multi-Task Learning with Theoretical Guarantee \thanks{Submitted to the editors August 2, 2019.
%\funding{This work was funded by -.}}}
%
%% Authors: full names plus addresses.
%\author{Amir Asiaee\thanks{Mathematical Biosciences Institute, The Ohio State University, Columbus, OH
%  (\email{asiaeetaheri.1@osu.edu}).}
%\and Samet Oymak\thanks{Department of Electrical and Computer Engineering, UC Riverside, Riverside, CA
%  (\email{oymak@ece.ucr.edu}).}
%\and Kevin R. Coombes\thanks{Departments of Biomedical Informatics, The Ohio State University, Columbus, OH
%	(\email{coombes.3@osu.edu}).}
%\and Arindam Banerjee\thanks{Department of Computer Science and Engineering, University of Minnesota, Minneapolis, MN
%	(\email{banerjee@cs.umn.edu}).}
%}
%

\usepackage{amsopn}

% Samet notation start
\newcommand{\Ic}{{\cal{I}}}
\newcommand{\ratio}{\bar{\rho}}
\newcommand{\lamin}{\lambda_{\min}}

\newcommand{\rinc}{\psi_{\Ic}}
% Samet notation end

\newcommand{\norm}[2]{\|#1\|_{#2}}

\newcommand{\normth}[2]{\vertiii{#1}_{#2}}

\newcommand{\beq}{\begin{equation}}
\newcommand{\eeq}{\end{equation}}

\newcommand\qedsymbol{$\blacksquare$}

%% ========================================================
\newcommand\B{\mathbf{B}}

\newcommand\I{\mathbf{I}}

\newcommand\V{\mathbf{V}}

\newcommand\X{\mathbf{X}}

\newcommand\0{\mathbf{0}}

%% ========================================================
%% Some useful commands. Note the dangerous \r command!
%% ========================================================

\newcommand{\g}{\mathbf{g}}

\renewcommand{\b}{\mathbf{b}}

\newcommand{\h}{\mathbf{h}}
 % since \r was already defined in altex
\renewcommand{\u}{\mathbf{u}}
\renewcommand{\v}{\mathbf{v}}

\newcommand{\w}{\mathbf{w}}
\newcommand{\x}{\mathbf{x}}
\newcommand{\y}{\mathbf{y}}

\newcommand{\p}{\mathbf{p}}
\newcommand{\q}{\mathbf{q}}

%----------------------------------------
%      THE ALTERNATIVES
%----------------------------------------

%----------------------------------------
%      THE SETS
%----------------------------------------
\newcommand{\cA}{{\cal A}}
\newcommand{\cB}{{\cal B}}
\newcommand{\cC}{{\cal C}}
\newcommand{\cE}{{\cal E}}

\newcommand{\cH}{{\cal H}}
\newcommand{\cI}{{\cal I}}
\newcommand{\cJ}{{\cal J}}
\newcommand{\cK}{{\cal K}}

\newcommand{\cS}{{\cal S}}

\newcommand{\cV}{{\cal V}}

\newcommand{\cX}{{\cal X}}

%--------------------------------------------
%         THE VARIABLES & APPROXIMATES
%--------------------------------------------

% Bold variables for vectors, matrices

\newcommand{\hbbe}{\hat{\bbeta}}

\newcommand{\pr}{\mathbb{P}}
\newcommand{\ex}{\mathbb{E}}

%---------MISC-----------------------------

\newcommand{\vertiii}[1]{{\left\vert\kern-0.25ex\left\vert\kern-0.25ex\left\vert #1
    \right\vert\kern-0.25ex\right\vert\kern-0.25ex\right\vert}}

\newcommand{\oomega}{\boldsymbol{\omega}}

\newcommand{\bbeta}{\boldsymbol{\beta}}

\newcommand{\ddelta}{\boldsymbol{\delta}}
\newcommand{\mmu}{\boldsymbol{\mu}}

%\newcommand{\norm}[2]{\ensuremath \|#1\|_{#2}}

%\newcommand{\myref}[1]{(\quad\quad#1})}

%----------------------------------------------
%        NEW OPERATORS
%----------------------------------------------

\DeclareMathOperator*{\argmin}{\arg\!\min}
\newcommand {\commentout}[1] {}

%------------------------------------------

%%%%%%%%%%%Definitions and Macros

\def\ints{{{\rm Z} \kern -.35em {\rm Z} }}  % ints
\def\smallints{{{\rm Z} \kern -.3em {\rm Z} }}  % small ints
\def\pints{{{\rm I} \kern -.15em {\rm N} }}      % pints
\newcommand{\reals}{\mathbb R}
\newcommand{\sphere}{\mathbb S^{p-1}}
\newcommand{\ball}{\mathbb B^p}

\def\cplx{{{\rm I} \kern -.45em {\rm C} }}       % complex
\def\l2{\rm {\mathcal L}^{2}(\reals)}            % l2

\newtheorem{theorem}{Theorem}
\newtheorem{remark}{Remark}
\newtheorem{definition}{Definition}
\newtheorem{lemma}[theorem]{Lemma}
\newtheorem{prop}[theorem]{Proposition}
\newtheorem{corollary}[theorem]{Corollary}
%
%
%{\theoremstyle{definition} \newtheorem{example}{Example}}
%{\theoremstyle{definition} \newtheorem{remark}{Remark}}
%
%%{\theorembodyfont{\upshape} \newtheorem{example}{Example}}
%%{\theorembodyfont{\upshape} \newtheorem{remark}{Remark}}

%\newtheorem{lemma}[theorem]{Lemma}
%\newtheorem{claim}{Claim}[theorem]
%\newtheorem{proposition}{Proposition}[section]

%\newtheorem{nad}{Notation and Definitions}[section]
%\newtheorem{notation}{Notation}[section]
%\newtheorem{remark}[theorem]{Remark}

\newtheorem{example}{Example}

\newcommand{\nr}{\nonumber}
\newcommand{\be}{\begin{eqnarray}}
\newcommand{\ee}{\end{eqnarray}}

\def\bea#1\eea{\begin{align}#1\end{align}}

\newcommand{\beaa}{\begin{eqnarray*}}
\newcommand{\eeaa}{\end{eqnarray*}}
\newcommand{\bnad}{\begin{nad}}
\newcommand{\enad}{\end{nad}}

\newcommand{\indic}{\mathbbm{1}}

%fraction with round brackets

\newcommand{\bcH}{\bar{\cH}} 
%%% Local Variables: 
%%% mode:latex
%%% TeX-master: "ex_article"
%%% End: 

\begin{document}

%\title{High Dimensional Data Sharing: \\ Multi-Task Learning with Theoretical Guarantee}
\title{High Dimensional Data Enrichment: \\ Interpretable, Fast, and Data-Efficient}

\author{Amir Asiaee,
        Samet Oymak,
        Kevin R. Coombes, 
        and Arindam Banerjee% <-this % stops a space
%\thanks{Manuscript received January 25, 2020; revised January 26, 2020. The research was supported in part by NSF grants OAC-1934634, IIS-1908104, IIS-1563950, IIS-1447566, IIS-1447574, IIS-1422557, and CCF-1451986. The work of S. Oymak is partially supported by the NSF award CNS-1932254. The work of A. Asiaee was partially supported by MBI (Mathematical Biosciences Institute) which receives its funding through NSF grant DMS-1440386.}
\thanks{The research was supported in part by NSF grants OAC-1934634, IIS-1908104, IIS-1563950, IIS-1447566, IIS-1447574, IIS-1422557, and CCF-1451986. The work of S. Oymak is partially supported by the NSF award CNS-1932254. The work of A. Asiaee was partially supported by MBI (Mathematical Biosciences Institute) which receives its funding through NSF grant DMS-1440386.}
\thanks{A. Asiaee is with Mathematical Biosciences Institute The Ohio State University, Columbus, OH, 43210 USA (e-mail: asiaeetaheri.1@osu.edu)}	
\thanks{S. Oymak is with Department of Electrical and Computer Engineering, UC Riverside, Riverside, CA, 92507 USA (e-mail: oymak@ece.ucr.edu)}	
\thanks{K. R. Coombes is with Departments of Biomedical Informatics, The Ohio State University, Columbus, OH, 43210 USA (e-mail: coombes.3@osu.edu)}	
\thanks{A. Banerjee is with Department of Computer Science and Engineering, University of Minnesota, Minneapolis, MN, 55455 USA (e-mail: banerjee@cs.umn.edu)}		
 }

% The paper headers
%\markboth{IEEE TRANSACTIONS ON SIGNAL PROCESSING,,~VOL.~14, NO.~8, JANUARY~2020}
\markboth{ }
{Asiaee \MakeLowercase{\textit{et al.}}: High Dimensional Data Enrichment: Interpretable, Fast, and Data-Efficient}

% make the title area
\maketitle

% As a general rule, do not put math, special symbols or citations
% in the abstract or keywords.
\begin{abstract}
	We consider the problem of multi-task learning in the high dimensional setting. In particular, we introduce an estimator and investigate its statistical and computational properties for the problem of multiple connected linear regressions known as Data Sharing. The between-tasks connections are captured by a cross-tasks \emph{common parameter}, which gets refined by per-task \emph{individual parameters}. Any convex function, e.g., norm, can characterize the structure of both common and individual parameters.	We delineate the sample complexity of our estimator and provide high probability non-asymptotic bound for estimation error of all parameters under a geometric condition. We show that the recovery of the common parameter benefits from \emph{all} of the pooled samples. We propose an iterative estimation algorithm with a geometric convergence rate and supplement our theoretical analysis with experiments on synthetic data. Overall, we present a first through statistical and computational analysis of inference in the data sharing model. 	
\end{abstract}

% Note that keywords are not normally used for peerreview papers.
\begin{IEEEkeywords}
Multi-task learning, superposition models, high-dimensional statistics, convergence rate analysis.
\end{IEEEkeywords}

\IEEEpeerreviewmaketitle

\section{Introduction}
Over the past two decades, major advances have been made in estimating structured parameters, e.g., sparse, low-rank, etc., in high-dimensional small sample problems \cite{candes2010power, donoho2006compressed, friedman2008sparse}. Such estimators consider a suitable (semi) parametric model of the response: $y = \phi(\x,\bbeta^*) + w$ based on $n$ samples $\{(\x_i,y_i)\}_{i=1}^n$ and the parameter of interest, $\bbeta^* \in \reals^p$. The unique aspect of such high-dimensional regime of $n \ll p$ is that the structure of $\bbeta^*$ makes
the estimation possible for large enough samples $n = m$ known as the sample complexity \cite{candes2009exact, candes2006robust, tibshirani1996regression}. While the earlier developments in such high-dimensional estimation problems had focused on parametric linear models, the results have been widely extended to non-linear models, e.g., generalized linear models \cite{bach2012optimization, negahban2009unified}, broad families of semi-parametric and single-index models \cite{boufounos20081, plan2017high}, non-convex models \cite{blumensath2009iterative,jain2013low}, etc.

In several real world problems, the assumption that one global model parameter $\bbeta_0^*$ is suitable for the entire population is unrealistic.
We consider the more general setting where the population consists of sub-populations (groups) which are similar is many aspects but have unique differences. For example, in the context of anti-cancer drug sensitivity prediction where the goal is to predict responses of different tumor cells to a drug, using the same prediction model across cancer types (groups) ignores the unique properties of each cancer and leads to an uninterpretable global model. Alternatively, in such a setting, one can assume a separate model for each group $g$ as $y = \phi(\x, \bbeta_g^*) + w$ based on a group specific parameter $\bbeta^*_g$. Such a modeling choice fails to leverage the similarities across the sub-populations, and can only be estimated when sufficient number of samples are available for each group which is not the case in several problems, e.g., anti-cancer drug sensitivity prediction \cite{barretina2012cancer, iorio2016landscape1}.

The middle ground model for such a scenario is the \emph{superposition} of common and individual parameters $\bbeta_0^* + \bbeta_g^*$ which has been of recent interest \cite{guba16, mctr13, Yang2013-pf}. Such a collection of \emph{coupled} superposition models is known by multiple names.
%in the statistical machine learning community. 
It is a form of multi-task learning \cite{jrsr10, Zhang2017-rm} when we consider regression in each group as a task. It is also called data sharing \cite{grti16} since information contained in different groups is shared through the common parameter $\bbeta_0^*$. Finally, it has been called data enrichment \cite{asiaee2018high, asiaeedata, Chen2015-fj} because we enrich our data set with pooling multiple samples from different but related sources. 

Following the successful application of such a modeling scheme in recent years \cite{domu16, grti16,  olvi14, olvi15}, we consider the below \emph{data sharing} (DS) model: 
\beq
\label{eq:dsl}
y_{gi} = \phi(\x_{gi}, (\bbeta_0^* + \bbeta^*_g)) + w_{gi}, \quad g \in \{1, \dots, G\}~,
\eeq
where $g$ and $i$ index the group and samples respectively. 
%\begin{figure}[!b]
%	\centering
%	\includegraphics[scale=.3]{./}
%	\caption{\small A conceptual illustration of data sharing model for learning representation of  cat species. The common parameter $\bbeta_0$ captures a \emph{generic cat} which consists of shared features among all cats.}
%	\label{fig:cat}		
%\end{figure}
The DS model \eqref{eq:dsl} assumes that there is a \emph{common} parameter $\bbeta^*_0$ shared between all groups which models similarities between all samples. And there are \emph{individual} per-group parameters $\bbeta_g^*$s each characterize the deviation of group $g$.%, Fig. \ref{fig:cat}.% and capture unique aspects of each group.

%\paragraph{The setting} 
Our goal is to design an estimation procedure which consistently recovers all parameters of DS \eqref{eq:dsl} fast and with small number of samples.
We specifically focus on the high-dimensional small sample regime where the number of samples $n_g$ for each group is much smaller than the ambient 
dimensionality, i.e., $\forall g: n_g \ll p$. Similar to all other high-dimensional models, we assume that the parameters $\bbeta_g$ are structured, i.e., for suitable convex functions $f_g$'s, $f_g(\bbeta_g)$ is small.
For example, when the structure is sparsity, $f_g$s are $l_1$-norms. Further, for the technical analysis and proofs,
we focus on the case of linear models, i.e., $\phi(\x,\bbeta) = \x^T \bbeta$. The results
seamlessly extend to more general non-linear models, e.g., generalized linear models, broad families of semi-parametric and single-index models, non-convex models, etc., using
existing results, i.e., how models like LASSO have been extended to these settings \cite{Kakade2010-st, negahban2012restricted, Plan2013-nx, Plan2016-de, Yang2016-zd}. %(e.g.~employing ideas such as restricted strong convexity ). %{\color{red}{it is OK I think}}\ab{Is the last line correct? Lets check this.}

%We call these set of models, \emph{``data sharing''} model \cite{grti16}.
%In high dimensional regime, we assume that both common and individual parameters are structured, i.e., for suitable \emph{convex} functions $f_g$, $f_g(\bbeta^*_g)$s are small.
% and one desirable scenario is when the common parameter is much denser than the individual ones.
%In other words, for $g \in \{1, \dots, G\}$ $\bbeta^*_g$s are $s_g$-sparse vectors where $s_0 \gg s_g$.
%The common parameter $\bbeta^*_0$ captures the shared effect of features among groups.
%the ``dense similarity'' and individual parameters capture ``slight difference'' between groups and hopefully common parameter estimation will benefit from all of the shared data.

\subsection{Related Work}
In the context of \emph{Multi-Task Learning} (MTL), similar models have been proposed which have the general form of $y_{gi} = \x_{gi}^T (\bbeta_{1g}^* + \bbeta^*_{2g}) + w_{gi}$ where $\B_1 = [\bbeta_{11}, \dots, \bbeta_{1G}]$ and $\B_2 = [\bbeta_{21}, \dots, \bbeta_{2G}]$ are two parameter matrices \cite{Zhang2017-rm}. To capture the relation between tasks, different types of constraints are assumed for parameter matrices. For example, \cite{Chen2012-fb} assumes $\B_1$ and $\B_2$ are sparse and low rank respectively. In this parameter matrix decomposition framework for MLT, the most related work to ours is the Dirty Statistical Model (DSM) proposed in \cite{jrsr10} where authors regularize the regression with $\norm{\B_1}{1, \infty}$ and $\norm{\B_2}{1,1}$ where norms are $i, j$-norms on \emph{rows}, $\b$, of matrices, i.e., $\norm{\B}{i,j} = \norm{(\norm{\b_1}{j}, \dots, \norm{\b_p}{j})}{i}$ and the norms are defined as $\norm{\b}{i} =$ {\small$\left(\sum_{g=1}^{G}|b_g|^i\right)^{1/i}$} and $\norm{\b}{\infty} = \max_{g\in G} |b_g|$. 

If in our DS model we pick all structure inducing functions $f_g$ to be $l_1$-norm, the resulting model is very similar to the DSM where $\norm{\B_1}{1, \infty}$ induces similarity between tasks and $\norm{\B_2}{1,1}$ models their discrepancies. On the other hand, the degree of freedom of DSM model is higher than DS because $\norm{\B_1}{1, \infty}$ regularizer enforces shared support of $\bbeta^*_{1g}$s, i.e., $\text{supp}(\bbeta_{1i}^*) = \text{supp}(\bbeta_{1j}^*)$ but allows $\bbeta_{1i}^* \neq \bbeta_{1j}^*$ while in DS we have a single common parameter $\bbeta_0^*$. So one would expect that DS estimators should have smaller sample complexity compared to their DSM counterparts and our analysis confirm that our estimator is more data efficient than DSM estimator of \cite{jrsr10}, Table \ref{compare}. Mainly, DSM requires every task $i$ to have large enough samples to learn its own common parameters $\bbeta_i$ but since DS shares the common parameter it only requires the {\em{total dataset over all tasks}} to be sufficiently large.

The linear DS model where $\bbeta_g$'s are sparse has recently gained attention because of its application in wide range of domains such as personalized medicine \cite{domu16}, sentiment analysis, banking strategy \cite{grti16}, single cell data analysis \cite{olvi15}, road safety \cite{olvi14}, and disease subtype analysis \cite{domu16}.
More generally, in any high-dimensional problem where the population consists of groups, data sharing has the potential to boost the prediction accuracy and results in a more interpretable set of parameters.

%\paragraph{Motivation} 
In spite of the recent surge in applying data sharing framework to different domains, limited advances have been made in
understanding the statistical and computational properties of suitable estimators for the DS model \eqref{eq:dsl}.
In fact, non-asymptotic statistical properties, including sample complexity and statistical rates of convergence, of regularized estimators for the data sharing model is still an open question \cite{grti16, olvi14}.
To the best of our knowledge, the only theoretical guarantee for data sharing is provided in \cite{olvi15} where authors prove sparsistency of their proposed method under the irrepresentability condition of the design matrix for recovering \emph{supports} of common and individual parameters.
%\ab{If they show support recovery, the condition may have been necessary -- maybe give them due credit, and show how much more the current results are, though we are doing norm consistency, not support recovery.}
Existing support recovery guarantees \cite{olvi15}, sample complexity and $l_2$ consistency results \cite{jrsr10} of related MTL models are restricted to sparsity and $l_1$-norm, while our estimator and \emph{norm consistency} analysis work for \emph{any} structure induced by arbitrary convex functions $f_g$. 
Moreover, no computational results, such as rates of convergence of the estimation procedures exist in the literature.

%{\bf Contributions:}
\subsection{Notation and Preliminaries}
We denote sets by curly $\cV$, matrices by bold capital $\V$, random variables by capital $V$, and vectors by small bold $\v$ letters.
We take $[G] = \{1, \dots, G\}$ and $[G_+] = [G] \cup \{0\}$. Throughout the manuscript $c_i$ and $C_i$ denote positive absolute constants.
%All $c_i$, $c$, and $C$ represent universal constants throughout the manuscript.
% are indexed with either a single number as $\v(i)$ or an index set $\cA$ as $\v_{\cA}$.
%Row $i$ of the matrix $\V$ is shown as $\v_i$ and $j$th element of the vector $\v$ is shown as $\v(j)$.
%The $(i,j)$th element of the matrix $\V$ is shown in three ways: $\V_{ij}$, $\v_i(j)$, or $v_{ij}$.
%We name the sample covariance of any matrix $M$ by $S_M$ vs. the actual parameter $\Sigma_M$.
Given $G$ groups and $n_g$ samples in each as $\{ \{\x_{gi}, y_{gi} \}_{i=1}^{n_g} \}_{g = 1}^G$, we can form the per group design matrix $\X_g \in \reals^{n_g \times p}$ and output vector $\y_g \in \reals ^{n_g}$.
The total number of samples is  $n = \sum_{g = 1}^{G} n_g$ and the data sharing model takes the following vector form:
\beq \label{eq:dirtymodel}
\y_g = \X_g (\bbeta _0^* + \bbeta _g^*) + \w_g,  \quad \forall g \in [G]
\eeq
where each row of $\X_g$ is $\x_{gi}^T$ and $\w_g^T = (w_{g1}, \dots, w_{gn_g})$ is the noise vector. It is useful for indexing to consider the common parameter as the zeroth group and define $n_0 \triangleq n$ and $\X_0 \triangleq [\X_1^T, \dots, \X_G^T]^T$.

\textbf{Sub-Gaussian random variable and vector:}
A random variable $V$ is sub-Gaussian if its moments satisfies $\forall p \geq 1: (\ex |V|^p )^{1/p} \leq K_2 \sqrt{p}$.
The minimum value of $K_2$ is called the sub-Gaussian  norm of $V$, denoted by $\normth{V}{\psi_2}$ \cite{vers12}.
A random vector $\v \in \reals^p$ is sub-Gaussian if the one-dimensional marginals $\langle \v, \u \rangle$ are sub-Gaussian random variables for all $\u \in \reals^p$. The sub-Gaussian norm of $\v$ is defined \cite{vers12} as $\normth{\v}{\psi_2} = \sup_{\u \in \sphere} \normth{\langle \v, \u \rangle}{\psi_2}$.
%We abuse notation and use shorthand $\v \sim \subg(0, \Sigma_{\v}, K_{\v})$ for zero mean sub-Gaussian random vector with covariance $\Sigma_{\v}$ and sub-Gaussian norm of $K_{\v}$, although keeping in mind that no other moments, nor the exact form of the distribution function is known.
For any set $\cV \in \reals^p$ the Gaussian width of the set $\cV$ is defined as $\omega(\cV) = \ex_\g \left[ \sup_{\u \in \cV} \langle \g, \u \rangle \right]$ \cite{vershynin2018high}, where the expectation is over $\g \sim N(\0, \I_{p \times p})$, a vector of independent zero-mean unit-variance Gaussian. The marginal tail function is defined as $Q_{\xi}(\u) = \pr(|\langle \x, , \u \rangle| > \xi)$ for a fixed vector $\u$, random vector $\x$ and constant $\xi > 0$.

%\subsection{Contributions}
\subsection{Our Contributions}
We propose the following Data Shared (DS) estimator $\hbbe$ for recovering the structured parameters where the structure is induced by \emph{convex} functions $f_g(\cdot)$:
\begin{align}
	\label{eq:super}
	\hbbe = (\hbbe_0^T, \dots, \hbbe_G^T) &\in \argmin_{\bbeta _0, \dots, \bbeta _G} \frac{1}{n} \sum_{g=1}^{G} \norm{\y_g - \X_g (\bbeta _0 + \bbeta _g)}{2}^2, 
	\\ \nr 
	&~\text{s.t.} ~ \forall g \in [G_+]:f_g(\bbeta _g) \leq f_g(\bbeta _g^*).
\end{align}
%We are investigating the conjecture that the pooled data from all groups will facilitate estimation of the common parameter $\bbeta_0^*$ in both samples complexity and error-bound rate regards.
%In our work, we explicitly answer these questions as follows:
We present several statistical and computational results for the DS estimator \eqref{eq:super}:
\begin{itemize}[leftmargin = .4cm]
	\item The DS estimator \eqref{eq:super} succeeds if a geometric condition that we call \emph{DAta SHaring Incoherence conditioN} (\ds) is satisfied, Fig. \ref{fig:DASHIN}. Compared to other known geometric conditions in the literature such as structural coherence \cite{guba16} and stable recovery conditions \cite{mctr13}, \ds\ is a considerably weaker condition, Fig. \ref{fig:sc}.
	\item Assuming \ds\ holds, we establish a high probability non-asymptotic bound on the weighted sum of parameter-wise estimation error, $\ddelta_g = \hbbe_g - \bbeta_g^*$ as:
	\beq
	\label{eq:errorsum}
	\sum_{g=0}^{G}  \sqrt{\frac{n_g}{n}} \|\ddelta_g\|_2 \leq  \gamma O\left(\frac{\max_{g \in [G]} \omega(\cC_g \cap \sphere)}{\sqrt{n}}\right),
%	\sum_{g=0}^{G}  \sqrt{\frac{n_g}{n}} \|\ddelta_g\|_2 \leq  C \gamma \frac{\max_{g \in [G]} \omega(\cC_g \cap \sphere) + \sqrt{\log (G+1)}}{\sqrt{n}},
	\eeq
	where $n_0 \triangleq n$ is the total number of samples, $\gamma \triangleq \max_{g \in [G] } \frac{n}{n_g}$ is the \emph{sample condition number}, and $\cC_g$ is the error cone corresponding to $\bbeta_g^*$ exactly defined in Section \ref{sec:esti}.
	To the best of our knowledge, this is the first statistical estimation guarantee for the data sharing.%ed model.
	%Gaussian width of a set $\cS$ is $\omega(\cS) = \ex_\g \left[ \sup_{\u \in \cS} \langle \g, \u \rangle \right]$. Gaussian width has been a standard tool for capturing the complexity of high-dimensional problems \cite{venkat12}.
	
%	Similar to previous works, we make a geometric assumption regarding the relation $\cC_g$s in each individual problem which is known as Structural Coherence assumption \cite{guba16, trop15}.
%	\item The general bound of \ref{eq:errorsum} entails the following bounds for specific parameters:
%	\be
%	\nr
%	\forall g \in [G]: \norm{\ddelta_g}{2} \leq c \gamma \frac{\max_{g \in [G]} \omega(\cC_g \cap \sphere) + c\sqrt{\log G}}{\sqrt{n_g}}
%	\ee
%	Observe that $l_2$-norm of the estimation error for the common parameter decays as $1/\sqrt{n}$, similar to the well-studied high dimensional regression case \cite{venkat12, banerjee14}.
%	So the common parameter's estimator exploits all of the pooled data to reduce its error.
	\item We also establish the sample complexity of the DS estimator for all parameters as $\forall g \in [G_+]: n_g = O(\omega(\cC_g \cap \sphere))^2$. We emphasize that our result proofs that the recovery of the common parameter $\bbeta_0$ by DS estimator \eqref{eq:super} benefits from \emph{all} of the $n$ pooled samples.
	\item We present an efficient projected block gradient descent algorithm \emph{\dc}, to solve DE's objective \eqref{eq:super} which converges geometrically to the statistical error bound of \eqref{eq:errorsum}. To the best of our knowledge, this is the first rigorous computational result for the high-dimensional data-shared regression.
%	\item We illustrate promising empirical performance of the model on synthetic data as well as on the problem of finding bio-markers associated with drug sensitivity of cell lines from different cancer types, where the support of estimated individual parameters $\text{supp}(\hbbe_g)$ for each cancer type $g$ represents a different set of bio-markers per cancer type.
\end{itemize}

The rest of this paper is organized as follows:
%First, we present a review of the related works in Section \ref{relwork}.
First, we characterize the error set of our estimator and provide a deterministic error bound in Section \ref{sec:esti}.
Then in Section \ref{sec:re}, we discuss the restricted eigenvalue condition and calculate the sample complexity required for the recovery of the true parameters by our estimator under \ds\ condition.
We close the statistical analysis in Section \ref{sec:error} by providing non-asymptotic high probability error bound for parameter recovery.
We delineate our geometrically convergent algorithm, \dc{} in Section \ref{sec:opt} and finally supplement our work with experiments on synthetic data in Section \ref{sec:expds}.% and \ref{realexp}.

\begin{figure}[h]
	\centering
	\begin{subfigure}[t]{0.2\textwidth}
		\includegraphics[width=\textwidth]{./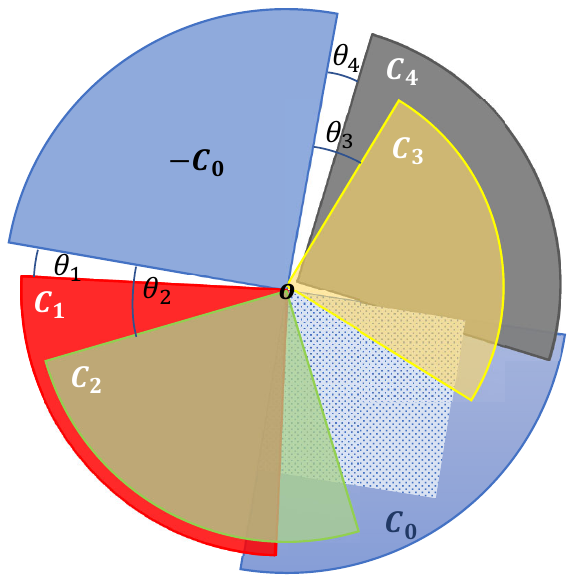}
		\caption{Structural Coherence (SC) condition.}\label{fig:sc}
	\end{subfigure} 
	~~~~~~~~
	\begin{subfigure}[t]{0.2\textwidth}
		\includegraphics[width=\textwidth]{./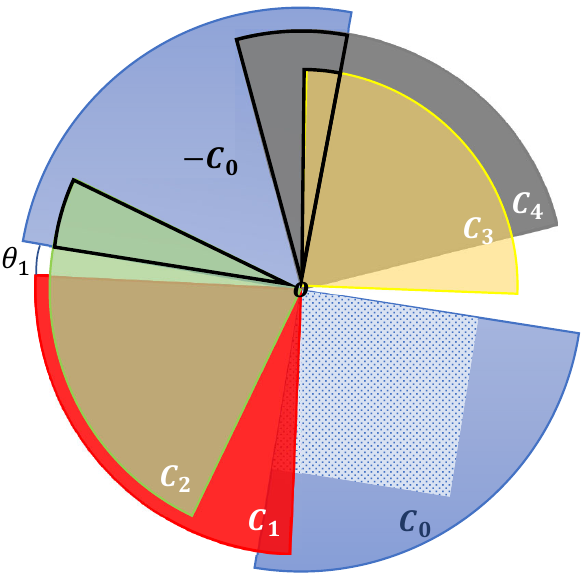}
		\caption{DAta SHaring Incoherence conditioN (\ds).}
		\label{fig:DASHIN}
	\end{subfigure}
%	\squeezeup
	\caption{\small Comparison of geometric recovery condition for superposition models known as Structural Coherence (SC) \cite{guba16} and our \ds\ recovery condition for data sharing model which is a system of coupled superposition models \eqref{eq:dirtymodel}. For each parameter $\bbeta^*_g \in [G]$, $\cE_g = \left\{\ddelta_g | f_g(\bbeta _g^* + \ddelta_g) \leq f_g(\bbeta _g^*)\right\}$ is the error set and $\cC_g = \text{Cone}(\cE_g)$ is the error cone. For all $i,j$, SC requires $-\cC_i \cap \cC_j = \{0\}$. In panel (a) we only show this condition for $i = 0$, i.e., $-\cC_0 \cap \cC_j = \{0\}$ where all $\theta_j > 0$. (b) \ds\ only needs one of the $\cC_g, g \in [G],$ does not intersect with the inverse of the common parameter error cone $-\cC_0$. In panel (b) $-\cC_0 \cap \cC_1 = \{0\}$ is enough for recovering all parameters.}
	\label{fig syn2}
\end{figure}

\section{The Data Shared Estimator}
\label{sec:esti}
A compact form of our proposed DS estimator \eqref{eq:super} is:% optimization problem as: %:
{\small \beq
\label{eq:compact}
\hbbe \in \argmin_{\bbeta } \frac{1}{n} \norm{\y - \X \bbeta }{2}^2, \text{s.t. } \forall g \in [G_+]:f_g(\bbeta_g ) \leq f_g(\bbeta^*_g),
\eeq }
where $\y  = (\y^T_1, \dots \y^T_G)^T \in \reals^n$,  $\bbeta  = ({\bbeta _0}^T, \dots, {\bbeta _G}^T)^T \in \reals^{(G+1)p}$ and
\beq
\label{eq:x}
\X =
\begin{pmatrix}
	\X_1     & \X_1      & 0      	   & \cdots & 0 \\
	\X_2     & 0       	 & \X_2        & \cdots & 0 \\
	\vdots 	 & \vdots  	 & \ddots 	   & \cdots & \vdots  \\
	\X_G     & 0       	 & \cdots 	   & \cdots & \X_G
\end{pmatrix}
\in \reals^{n \times (G+1)p}~.
\eeq
%\ab{Give at least 2 specific examples, e.g., a sparse+sparse for vector estimation, and a low-rank + sparse for matrix estimation. And we want to carry these examples through
%the technical results, e.g., see the Chen-Banerjee NIPS 2015 paper.}

\begin{example} \label{exm:sde}
{\bf ($l_1$-norm)} When all parameters $\bbeta_g$s are $s_g$-sparse, i.e.,$|\text{supp}(\bbeta_{g}^*)| = s_g$ by using $l_1$-norm as the sparsity inducing function, our DS estimator of \eqref{eq:compact} becomes:
\be \nr 
\hbbe \in \argmin_{\bbeta } \frac{1}{n} \norm{\y - \X \bbeta }{2}^2, \text{s.t. }  \forall g \in [G_+]: \norm{\bbeta_g}{1} \leq \norm{\bbeta^*_g}{1}.
\ee 
We call \eqref{exm:sde} \emph{sparse DS} estimator and use it as the running example throughout the paper to illustrate outcomes of our analysis.% \hfill {\color{header1} \qedsymbol}
\end{example}
Consider the group-wise estimation error $\ddelta_g = \hbbe_g - \bbeta^*_g$.
Since $\hbbe_g = \bbeta ^*_g + \ddelta_g$ is a feasible point of \eqref{eq:compact}, the error vector $\ddelta_g$ will belong to the following restricted error set:% which is the set of all descent directions at $\bbeta _g^*$ on $f_g(\cdot)$ :
\be
\nr
\cE_g = \left\{\ddelta_g | f_g(\bbeta _g^* + \ddelta_g) \leq f_g(\bbeta _g^*)\right\}, \quad g \in [G_+].
\ee
We denote the cone of the error set as $\cC_g \triangleq \text{Cone}(\cE_g)$ and the spherical cap corresponding to it as $\cA_g \triangleq \cC_g \cap \sphere$.
Consider the set $\cC = \{ \ddelta = (\ddelta_0^T, \dots, \ddelta_G^T)^T \Big| \ddelta_g \in \cC_g \}$, following two subsets of $\cC$ play key roles in our analysis:
%\be
%%\nr
%\cH  &\triangleq&  \Big\{ \ddelta \in \cC \big| \sum_{g=0}^{G} {\frac{n_g}{n}} \norm{\ddelta_g}{2} = 1 \Big\}~, %\label{setH}
%\\ %\nr
%\bcH &\triangleq&  \Big\{ \ddelta \in \cC \big| \sum_{g=0}^{G} \sqrt{\frac{n_g}{n}} \norm{\ddelta_g}{2} = 1 \Big\}~. %\label{setH}
%\ee
\begin{align}
\label{setH}
\cH  &\triangleq  \Big\{ \ddelta \in \cC \big| \sum_{g=0}^{G} {\frac{n_g}{n}} \norm{\ddelta_g}{2} = 1 \Big\}, 
\\ \nr 
\bcH &\triangleq  \Big\{ \ddelta \in \cC \big| \sum_{g=0}^{G} \sqrt{\frac{n_g}{n}} \norm{\ddelta_g}{2} = 1 \Big\}~. %\label{setH}
\end{align}
	
Starting from the optimality of $\hbbe = \bbeta ^* + \ddelta$ as $\frac{1}{n}\norm{\y - \X \hbbe}{2}^2 \leq \frac{1}{n} \norm{\y - \X \bbeta ^*}{2}^2$, we have: $\frac{1}{n}\norm{\X \ddelta}{2}^2 \leq \frac{1}{n}2\w^T \X\ddelta$ where $\w = [\w_1^T, \dots, \w_G^T]^T \in \reals^n$ is the vector of all noises.
Using this basic inequality, we can establish the following deterministic error bound.
\begin{theorem}
	\label{theo:deter}
	For the DS estimator \eqref{eq:compact}, assume there exist $0 < \kappa \leq \inf_{\u \in \cH} \frac{1}{n} \norm{\X \u}{2}^2$. Then, for the sample condition number $\gamma = \max_{g \in [G]} \frac{n}{n_g}$, the following deterministic upper bounds holds:
	\be
	\nr
	\sum_{g=0}^{G} \sqrt{\frac{n_g}{n}} \norm{\ddelta_g}{2} \leq \frac{2{\gamma}\sup_{\u \in \bcH}\w^T \X \u}{n\kappa}~. %\nr	
	\ee
\end{theorem}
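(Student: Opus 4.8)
The plan is to start from the basic inequality $\frac{1}{n}\|\X\ddelta\|_2^2 \le \frac{2}{n}\oomega^T\X\ddelta$ already derived in the text, and split into the trivial case $\ddelta = \0$ (where the claimed bound holds vacuously) and the case $\ddelta \neq \0$. In the nontrivial case I would normalize: since $\ddelta_g \in \cC_g$ for every $g$, the aggregated vector $\ddelta$ lies in $\cC$, and I can write $\ddelta = r\,\u$ where $r = \sum_{g=0}^{G} \sqrt{n_g/n}\,\|\ddelta_g\|_2 > 0$ and $\u \in \bcH$ by construction of $\bcH$. The goal inequality is exactly an upper bound on $r$.

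Next I would lower-bound the left side and upper-bound the right side of the basic inequality in terms of $r$ and the two normalized sets $\cH$, $\bcH$. For the right side, $\oomega^T\X\ddelta = r\,\oomega^T\X\u \le r\,\sup_{\u \in \bcH}\oomega^T\X\u$, so $\frac{1}{n}\|\X\ddelta\|_2^2 \le \frac{2r}{n}\sup_{\u\in\bcH}\oomega^T\X\u$. For the left side I need to relate $\|\X\ddelta\|_2^2$ to the restricted-eigenvalue-type quantity $\kappa$, which is defined via the infimum over $\cH$ (the set normalized by $\sum n_g/n\,\|\ddelta_g\|_2 = 1$, \emph{not} $\bcH$). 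The key bridge is a Cauchy–Schwarz / power-mean comparison between the two normalizations: if $s \triangleq \sum_{g=0}^G \frac{n_g}{n}\|\ddelta_g\|_2$, then writing $\ddelta = s\,\v$ with $\v \in \cH$ gives $\frac{1}{n}\|\X\ddelta\|_2^2 = s^2 \cdot \frac{1}{n}\|\X\v\|_2^2 \ge \kappa s^2$. So I then need to show $s^2 \ge r^2/\gamma$, i.e. that $\sum \frac{n_g}{n}\|\ddelta_g\|_2 \ge \frac{1}{\gamma}\big(\sum \sqrt{n_g/n}\,\|\ddelta_g\|_2\big)^2$. This is where $\gamma = \max_{g}\frac{n}{n_g}$ enters: by Cauchy–Schwarz, $\big(\sum \sqrt{\tfrac{n_g}{n}}\|\ddelta_g\|_2\big)^2 = \big(\sum \sqrt{\tfrac{n_g}{n}}\cdot\sqrt{\tfrac{n_g}{n}}\|\ddelta_g\|_2\cdot\sqrt{\tfrac{n}{n_g}}\big)^2 \le \big(\sum \tfrac{n_g}{n}\big)\big(\sum \tfrac{n}{n_g}\cdot\tfrac{n_g}{n}\|\ddelta_g\|_2^2\big)$... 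I would instead use the cleaner route: $\sqrt{n_g/n}\,\|\ddelta_g\|_2 = \sqrt{n/n_g}\cdot(n_g/n)\|\ddelta_g\|_2 \le \sqrt{\gamma}\,(n_g/n)\|\ddelta_g\|_2$, so $r \le \sqrt{\gamma}\,s$, hence $s^2 \ge r^2/\gamma$.

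Combining these: $\kappa\,\frac{r^2}{\gamma} \le \kappa s^2 \le \frac{1}{n}\|\X\ddelta\|_2^2 \le \frac{2r}{n}\sup_{\u\in\bcH}\oomega^T\X\u$, and dividing by $r > 0$ and rearranging yields $r \le \frac{2\gamma\,\sup_{\u\in\bcH}\oomega^T\X\u}{n\kappa}$, which is the claimed bound. I expect the only delicate point to be making the normalization bookkeeping precise — in particular checking that $\u \in \bcH$ and $\v \in \cH$ genuinely lie in the \emph{cones} $\cC_g$ (so that $\kappa$ applies) rather than merely in the error sets $\cE_g$, which holds because each $\cC_g = \mathrm{Cone}(\cE_g)$ is closed under positive scaling and $\ddelta_g/c \in \cC_g$ for any $c>0$. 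Everything else is Cauchy–Schwarz and the given optimality inequality; there is no probabilistic content in this theorem, that being deferred to the later sections bounding $\sup_{\u\in\bcH}\oomega^T\X\u$ and $\kappa$.
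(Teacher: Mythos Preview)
Your proposal is correct and follows essentially the same route as the paper: normalize $\ddelta$ into $\bcH$ for the upper bound, into $\cH$ for the RE lower bound, and bridge the two normalizations via $\sqrt{n_g/n}\,\|\ddelta_g\|_2 \le \sqrt{\gamma}\,(n_g/n)\|\ddelta_g\|_2$ (the paper writes this as $(\sum \tfrac{n_g}{n}\|\ddelta_g\|_2)^2 \ge (\min_g \tfrac{n_g}{n})(\sum \sqrt{\tfrac{n_g}{n}}\|\ddelta_g\|_2)^2$, which is the same inequality). Your explicit check that the rescaled vectors land in the cones $\cC_g$ is a point the paper leaves implicit.
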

\begin{IEEEproof}
	We lower bound the LHS and upper bound the RHS of the optimality inequality $\frac{1}{n}\norm{\X \ddelta}{2}^2 \leq \frac{1}{n}2\w^T \X\ddelta$ using the definition of the sets $\cH$ and $\bcH$ respectively. 
	Starting with the lower bound using the definition of set $\cH$ \eqref{setH} we have:
%	\be 
%	\label{eq:tre} 
%	\frac{1}{n}\norm{\X \ddelta}{2}^2 &\geq& \frac{1}{n} \inf_{\u \in \cH} \norm{\X \u}{2}^2  \left(\sum_{g=0}^{G} {\frac{n_g}{n}} \norm{\ddelta_g}{2} \right)^2 \\ \nr
%	&\geq& \kappa  \left(\sum_{g=0}^{G} {\frac{n_g}{n}} \norm{\ddelta_g}{2} \right)^2  \nr
%	\\ \nr 
%	&\geq& \kappa  \left(\min_{g \in [G] } \frac{n_g}{n}\right) \left(\sum_{g=0}^{G} \sqrt{\frac{n_g}{n}} \norm{\ddelta_g}{2} \right)^2  
%	\ee 	
	{\small
	\begin{align}
		\nr 
		\frac{1}{n}\norm{\X \ddelta}{2}^2 &\geq \frac{1}{n} \inf_{\u \in \cH} \norm{\X \u}{2}^2  \left(\sum_{g=0}^{G} {\frac{n_g}{n}} \norm{\ddelta_g}{2} \right)^2 
		\geq \kappa  \left(\sum_{g=0}^{G} {\frac{n_g}{n}} \norm{\ddelta_g}{2} \right)^2  
		\\  \label{eq:tre}  
		&\geq \kappa  \left(\min_{g \in [G] } \sqrt{\frac{n_g}{n}}\right)^2 \left(\sum_{g=0}^{G} \sqrt{\frac{n_g}{n}} \norm{\ddelta_g}{2} \right)^2 
		\\ \nr 
		&= \kappa  \left(\min_{g \in [G] } \frac{n_g}{n}\right) \left(\sum_{g=0}^{G} \sqrt{\frac{n_g}{n}} \norm{\ddelta_g}{2} \right)^2  		 		
	\end{align}
	}

%	\eeq 
	where $0 < \kappa \leq \frac{1}{n}  \inf_{\u \in \cH} \norm{\X \u}{2}^2 $ is known as Restricted Eigenvalue (RE) condition. 
	The upper bound factorizes as:
	\beq 
	\label{eq:tub}
	\frac{2}{n}\w^T \X\ddelta \leq \frac{2}{n} \sup_{\u \in \bcH} \w^T \X \u \left(\sum_{g=0}^{G} \sqrt{\frac{n_g}{n}} \norm{\ddelta_g}{2} \right) , \u \in \cH \\ 
	\eeq
	Putting together inequalities \eqref{eq:tre} and \eqref{eq:tub} completes the proof. 
\end{IEEEproof}

\begin{remark}
	\label{rem1}
Consider the setting where $n_g = \Theta(\frac{n}{G})$ so that each group has approximately $\frac{1}{G}$ fraction of the samples. Then, $\gamma = \Theta(G)$ and hence
\beq
\nr 
\frac{1}{G} \sum_{g=0}^G \| \delta_g \|_2 \leq O( G^{1/2} ) \frac{\sup_{\u \in \bcH}\w^T \X \u}{n}~.
\eeq
\end{remark}

%We define $\gamma_g = \norm{\ddelta_g}{2}$ and for $\ddelta \in \cH$ we have $\sum_{g = 0}^{G} \gamma_g = 1$.
%We are interested in the following RE condition:
%\begin{remark}
%	As we show in the following, the lower bound $\kappa  < \inf_{\u \in \cC} \norm{\X \u}{2}^2 $, holds even for the larger set $\cC \supset \cA$, which is an interesting result.	On the other hand, if we work with the set $\cC$ in the upper bound, the bound becomes loose.
%	Therefor, we focus on the set $\cA$ to get the tighter upper bound.
%\end{remark}

\section{Restricted Eigenvalue Condition}
\label{sec:re}
The main assumption of Theorem \ref{theo:deter} is known as Restricted Eigenvalue (RE) condition in the literature of high-dimensional statistics \cite{banerjee14, nrwy12, raskutti10}:
$\inf_{\u \in \cH} \frac{1}{n} \norm{\X \u}{2}^2 \geq \kappa > 0.$
%\be
%\label{eq:recond}
%\inf_{\u \in \cH} \frac{1}{n} \norm{\X \u}{2}^2 \geq \kappa > 0.
%\ee
The RE condition posits that the minimum eigenvalues of the matrix $\X^T \X$ in directions restricted to $\cH$ is strictly positive.
In this section, we show that for the design matrix $\X$ defined in \eqref{eq:x}, the RE condition holds with high probability under a suitable geometric condition we call {\em DAta SHaring Incoherence conditioN} (\ds) and for enough number of samples.
We precisely characterize total and per-group sample complexities required for successful parameter recovery.
For the analysis, similar to existing work \cite{guba16, mend15, trop15}, we assume the design matrix to be isotropic sub-Gaussian.\footnote{Extension to an-isotropic sub-Gaussian case is straightforward by techniques developed in \cite{banerjee14, ruzh13}.}
\begin{definition}
	\label{def:obs}
	We assume $\x_{gi}$ are i.i.d. random vectors from a non-degenerate zero-mean, isotropic sub-Gaussian distribution. In other words, $\ex [\x] = 0$, $\ex [\x^T \x] = \I_{p \times p}$, and $\normth{\x}{\psi_2} \leq k_x$.	
As a consequence, $\exists \alpha > 0$ such that $\forall \u \in \sphere$ we have $ \ex|\langle \x, \u \rangle| \geq \alpha$. Further, we assume noise $w_{gi} $ are i.i.d.
zero-mean, unit-variance sub-Gaussian with $\normth{w_{gi}}{\psi_2} \leq k_w$.
\end{definition}

\subsection{Geometric Condition for Recovery}
Unlike standard high-dimensional statistical estimation, for RE condition to be true, parameters of superposition models need to satisfy geometric conditions which limits the interaction of the error cones of parameters with each other to make sure that recovery is possible. In this section, we elaborate our sufficient geometric condition for recovery and compare it with state-of-the-art condition for recovery of superposition models. 

To intuitively illustrate the necessity of such a geometric condition, consider the simplest superposition model i.e., $\bbeta^*_0 + \bbeta^*_g$. Without any restriction on interactions of error cones, any estimates such that $\hbbe_0 + \hbbe_g = \bbeta^*_0 + \bbeta^*_g$ are valid ones. To avoid such trivial solutions two error cones need to satisfy $\ddelta_g \neq -\ddelta_0$. In general, the RE condition of individual superposition models can be established under the so-called Structural Coherence (SC) condition \cite{guba16, mctr13} which is the generalization of this idea for superposition of multiple parameters as $\sum_{g = 0}^{G} \bbeta^*_g$.
 
\begin{definition}[Structural Coherence (SC) \cite{guba16, mctr13}] \label{scc}
	Consider a superposition model of the form $y = \x^T \sum_{g = 0}^{G} \bbeta^*_g + w$. The SC condition requires that
%	\be 
%	\nr 
$	\forall \ddelta_g \in \cC_g, \exists \lambda \text{ s.t. }  \norm{\sum_{g = 0}^{G} \ddelta_g}{2} \geq  \lambda \sum_{g = 0}^{G}  \norm{\ddelta_g}{2}$,
%	\ee  
	and leads to the RE condition $\frac{1}{\sqrt{n}}\norm{\X \sum_{g = 1}^{G} \ddelta_g}{2} \geq \kappa \sum_{g=1}^{G} \norm{\ddelta_g}{2}$.
\end{definition}

\begin{remark}
	Note that the SC condition is satisfied if none of the individual error cones $\cC_g$ intersect with the inverted error cone $-\cC_0$ \cite{guba16, trop15}, i.e., $\forall g, \theta_g > 0$ in Fig. \ref{fig:sc} where 
	\be 
	\nr 
	\cos(\theta_g) = \sup_{\ddelta_0 \in \cC_0, \ddelta_g \in \cC_g} -\langle \ddelta_0/\norm{\ddelta_0}{2}, \ddelta_g/\norm{\ddelta_g}{2} \rangle.
	\ee
\end{remark}
Next, we introduce \ds, a considerably weaker geometric condition compared to SC which leads to recovery of all parameters in the data sharing model. 
\begin{definition}[DAta SHaring Incoherence conditioN (\ds)]  \label{incodef}
	There exists a non-empty set $\cI\subseteq [G]$ of groups where for some scalars $0 < \ratio\leq 1$ and $\lamin>0$ the following holds:
	\begin{enumerate}
		\item $\sum_{g\in \cI} n_g\geq \lceil \ratio n\rceil$.
		\item $\forall g \in \cI$, $\forall \ddelta_g \in \cC_g$, and $\ddelta_0\in\cC_0$: $\norm{\ddelta_g+\ddelta_0}{2}\geq \lamin (\norm{\ddelta_0}{2}+\norm{\ddelta_g}{2})$
	\end{enumerate}
	Observe that $0 < \lamin,\ratio\leq 1$ by definition.
\end{definition}
%\ab{why would $\bar{\rho}=0$ work? also, can $\cI$ be empty? pls update as needed} 

%\ab{add a remark -- so the reader can follow what we are saying}

%\ab{It will be great to have a Figure showing the difference between SC and \ds.}

\begin{remark}
	\ds\ is a refinement of SC for the specific problem of data sharing, i.e., system of coupled superposition model each with two components. \ds\ holds even if only one of the $\cC_g$s does not intersect with $-\cC_0$. More specifically, \ds\ holds if $\exists g, \theta_g > 0$ in Fig. \ref{fig:DASHIN} which allows $-\cC_0$ to intersect with an arbitrarily large fraction of the $\cC_g$ cones and as the number of intersections increases, our final error bound becomes looser.
\end{remark}

\subsection{Sample Complexity}
An alternative to our DS estimator \eqref{eq:super} may be based on $G$ \emph{isolated} superposition model $\y_g = \X_g (\bbeta _0^* + \bbeta _g^*) + \w_g$ each with two components. Now, if SC holds for at least one of the superposition models, i.e., $\exists g, -\cC_0 \cap \cC_g = \{0 \}$, one can recover $\hbbe_0$ and plug it in to the remaining $G-1$ superposition estimators to estimate the corresponding $\hbbe_g$s. We call such an estimator, \emph{plugin superposition} estimator for which it seems that \ds\ has no advantage over SC. But the disadvantage of plugin superposition estimator is that it fails to utilize the true coupling structure in the data sharing model, where $\bbeta^*_0$ is involved in all groups. In fact, below we show that the plugin superposition estimator under SC condition leads to a pessimistic sample complexity for $\bbeta^*_0$ recovery.
\begin{prop}
	\label{prop:super}
	Assume observations distributed as defined in \ref{def:obs} and pair-wise SC conditions are satisfied.  Consider each superposition model \eqref{eq:dirtymodel} in isolation; to recover the common parameter $\bbeta _0^*$ plugin superposition requires at least one group $i$ to have $n_i = O(\max(\omega^2(\cA_0), \omega^2(\cA_i)))$. 
	To recover the rest of parameters, it needs $\forall g \neq i: n_g = O(\omega^2(\cA_g))$ samples. 
\end{prop}
In other words, by separate analysis of superposition estimators at least one problem needs to have sufficient samples for recovering the common parameter $\bbeta_0$ and therefore the common parameter recovery does not benefit from the pooled $n$ samples.
But given the nature of coupling in the data sharing model, we hope to be able to get a better sample complexity specifically for the common parameter $\bbeta_0$.
Using \ds\ and the small ball method \cite{mend15}, a tool from empirical process theory in the following theorem, we get a better sample complexity required for satisfying the RE condition:
\begin{theorem}
	\label{theo:re}
	Let $\x_{gi}$s	be random vectors defined in Definition \ref{def:obs}.
	Assume \ds\ condition of Definition \ref{incodef} holds for error cones $\cC_g$s and $\rinc=\min\{1/2, \lamin\ratio/3\}$.
	Then, for all $\ddelta \in \cH$, when we have enough number of samples as $\forall g \in [G_+]: n_g \geq m_g = O(k_x^6 \alpha^{-6} \rinc^{-2} \omega(\cA_g)^2)$, with probability at least $1 - e^{-n \kappa_{\min}/4}$  we have:
%	\be
%	\nr
	$\inf_{\ddelta \in \cH} \frac{1}{\sqrt{n}} \norm{\X \ddelta}{2} \geq \frac{\kappa_{\min}}{2}$,
%	\ee
	where $\kappa_{\min} = \min_{g\in [G_+]} C \rinc \frac{\alpha^3}{k_x^2}  - \frac{2 c_g k_x \omega(\cA_g)}{\sqrt{n_g}}$. 
\end{theorem}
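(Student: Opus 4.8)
\emph{Overall approach.} The plan is to establish the restricted eigenvalue bound via Mendelson's small-ball method applied to the structured operator $\X$ over the set $\cH$. I would start from the identity $\frac1n\norm{\X\ddelta}{2}^2 = \frac1n\sum_{g=1}^G\sum_{i=1}^{n_g}\langle\x_{gi},\ddelta_0+\ddelta_g\rangle^2$, which writes the quadratic form as an average of $n$ \emph{independent} (but not identically distributed) nonnegative terms. The small-ball lemma in the form valid for independent, non-identical summands --- via the soft-indicator / $\ell_1$--$\ell_2$ argument underlying \cite{mend15} --- then gives, with high probability,
\[
\inf_{\ddelta\in\cH}\frac1{\sqrt n}\norm{\X\ddelta}{2}\ \ge\ 2\xi\,Q_{2\xi}(\cH)\ -\ \frac{c}{\sqrt n}\,\cW_n(\cH)\ -\ \frac{2\xi\,t}{\sqrt n},
\]
with $Q_{2\xi}(\cH) = \inf_{\ddelta\in\cH}\sum_{g=1}^G\frac{n_g}{n}\,\pr\!\big(|\langle\x,\ddelta_0+\ddelta_g\rangle|\ge 2\xi\big)$ the averaged small-ball function and $\cW_n(\cH) = \ex\sup_{\ddelta\in\cH}\langle\h,\ddelta\rangle$ the multiplier (Rademacher) complexity, $\h$ being the symmetrized average of the rows of $\X$. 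It then remains to lower bound $Q_{2\xi}$, upper bound $\cW_n$, and calibrate $\xi$ and $t$.

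\emph{The small-ball function and DEIC.} Using isotropy $\ex\langle\x,\v\rangle^2=\norm\v{2}^2$ together with $\ex|\langle\x,\v\rangle|\ge\alpha\norm\v{2}$ and $\normth{\langle\x,\v\rangle}{\psi_2}\le k\norm\v{2}$ from Definition \ref{def:obs}, a Paley--Zygmund argument yields $\pr(|\langle\x,\v\rangle|\ge 2\xi)\gtrsim \alpha^2/k^2$ whenever $\norm\v{2}\gtrsim \xi/\alpha$, hence
\[
Q_{2\xi}(\cH)\ \gtrsim\ \frac{\alpha^2}{k^2}\ \inf_{\ddelta\in\cH}\ \sum_{g:\ \norm{\ddelta_0+\ddelta_g}{2}\,\ge\, c\xi/\alpha}\frac{n_g}{n}~.
\]
The crux is to show this residual mass is at least $\rinc=\lamin\ratio/3$, and here I would run a dichotomy on how the unit mass $\norm{\ddelta_0}{2}+\sum_{g\in\Gsm}\frac{n_g}{n}\norm{\ddelta_g}{2}=1$ defining $\cH$ (recall $n_0=n$) is distributed. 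If $\norm{\ddelta_0}{2}$ is bounded below, part~2 of DEIC (Definition \ref{incodef}) forces $\norm{\ddelta_0+\ddelta_g}{2}\ge\lamin\norm{\ddelta_0}{2}$ to be large \emph{simultaneously for every} $g\in\cI$, and part~1 of DEIC gives those groups aggregate mass $\sum_{i\in\cI}\frac{n_i}{n}\ge\ratio$. If instead $\norm{\ddelta_0}{2}$ is small, the individual components must carry essentially all of the unit mass, so $\norm{\ddelta_0+\ddelta_g}{2}\ge\norm{\ddelta_g}{2}-\norm{\ddelta_0}{2}$ stays large on a set of groups of sufficient mass --- cancellation between $\cC_0$ and $\cC_g$ being impossible there simply because $\ddelta_0$ is small. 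Taking $\xi\asymp\alpha$ and tracking constants gives $2\xi\,Q_{2\xi}(\cH)\ \ge\ C\,\rinc\,\dfrac{\alpha^3}{k^2}$, the leading term of $\kappa_{\min}$.

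\emph{The multiplier term and assembly.} For the complexity term, write $\langle\h,\ddelta\rangle=\langle\h_0,\ddelta_0\rangle+\sum_{g=1}^G\langle\h_g,\ddelta_g\rangle$: the block pairing with $\ddelta_0$ aggregates all $n$ rows (isotropic at unit scale), while the block pairing with $\ddelta_g$ aggregates only $n_g$ rows (scale $\sqrt{n_g/n}$). Parametrizing $\ddelta_g=\norm{\ddelta_g}{2}\bar\ddelta_g$ with $\bar\ddelta_g\in\cA_g$ and exploiting that the constraint defining $\cH$ is a weighted-simplex constraint, the supremum over $\cH$ is attained --- up to constants --- at an extreme point, so
\[
\frac{c}{\sqrt n}\,\cW_n(\cH)\ \lesssim\ \max_{g\in\Gsm}\frac{2c_g\,k\,\omega(\cA_g)}{\sqrt{n_g}}\ +\ \frac{c\,k\,\omega(\cA_0)}{\sqrt n}\ +\ \frac{c\,k\sqrt{\log(G+1)}}{\sqrt n}~,
\]
the first term being dominant and the other two controlled once the total sample size is adequate (which is in any case required for the final estimation-error bound). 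Subtracting leaves $C\rinc\alpha^3/k^2-\max_{g}2c_gk\omega(\cA_g)/\sqrt{n_g}=\kappa_{\min}$; choosing $t$ of order $\sqrt n\,\kappa_{\min}$ makes the deviation term at most $\kappa_{\min}/2$ while retaining the stated failure probability, giving $\inf_{\ddelta\in\cH}\tfrac1{\sqrt n}\norm{\X\ddelta}{2}\ge\kappa_{\min}/2$ and hence $\kappa=\kappa_{\min}^2/4$. Positivity of $\kappa_{\min}$ is precisely $2c_gk\omega(\cA_g)/\sqrt{n_g}\le\tfrac12 C\rinc\alpha^3/k^2$ for every $g$, i.e.\ $n_g\ge m_g=O(k^6\alpha^{-6}\rinc^{-2}\omega(\cA_g)^2)$.

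\emph{The main obstacle.} The hard part is the uniform lower bound $\inf_{\ddelta\in\cH}\sum_{g:\,\norm{\ddelta_0+\ddelta_g}{2}\ge\tau}\frac{n_g}{n}\ge\rinc$. DEIC only prohibits $\cC_0$/$\cC_g$ cancellation on the privileged subset $\cI$, whereas the $\cH$-mass may concentrate on the individual components of groups \emph{outside} $\cI$, where $\ddelta_0+\ddelta_g$ could a priori nearly vanish; the dichotomy above is what reconciles this, but forcing the two regimes to overlap with the clean constant $\rinc=\lamin\ratio/3$, and absorbing the balance number $\gamma$ into universal constants, is where the real work lies. The remaining ingredients --- the Paley--Zygmund small-ball estimate, the sub-Gaussian comparison passing from $\cW_n$ to Gaussian widths, and the concentration of the lower tail --- are by now routine.
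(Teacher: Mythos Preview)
Your overall architecture --- Mendelson's small-ball method, Paley--Zygmund for the marginal, DEIC for the geometric step, symmetrization for the complexity --- matches the paper's. But the specific way you set up the small-ball term has a genuine gap.

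You work with a \emph{fixed} threshold $\xi$ and try to lower bound the averaged tail mass
\[
Q_{2\xi}(\cH)\ \gtrsim\ \frac{\alpha^2}{k^2}\ \inf_{\ddelta\in\cH}\ \sum_{g:\ \norm{\ddelta_0+\ddelta_g}{2}\,\ge\, c\xi/\alpha}\frac{n_g}{n}
\]
uniformly by $\rinc$. This fails. Take $\ddelta_0=0$ and put all the $\cH$-mass on a single group $g^\star$, i.e.\ $\frac{n_{g^\star}}{n}\norm{\ddelta_{g^\star}}{2}=1$ and $\ddelta_g=0$ for $g\neq g^\star$. Then $\norm{\ddelta_0+\ddelta_g}{2}=0$ for every $g\neq g^\star$, so the only group contributing to your mass is $g^\star$, and the mass is exactly $n_{g^\star}/n$, which can be arbitrarily small. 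Your Case~2 dichotomy does not rescue this: ``the individual components carry essentially all of the unit mass'' is compatible with that mass sitting on \emph{one} group of tiny sample fraction. Consequently $2\xi\,Q_{2\xi}(\cH)$ cannot be bounded below by $C\rinc\alpha^3/k^2$, and since you bound $\inf_\ddelta(t_1-t_2)$ by $\inf_\ddelta t_1-\sup_\ddelta t_2$, the resulting lower bound can be negative no matter how large the $n_g$ are.

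The paper avoids this by using a \emph{group-scaled} threshold $\xi_g=\xi\,\norm{\ddelta_0+\ddelta_g}{2}$. After Paley--Zygmund on unit vectors, the small-ball term becomes
\[
\xi\cdot\frac{(\alpha-2\xi)^2}{4ck^2}\cdot\sum_{g=1}^G\frac{n_g}{n}\,\norm{\ddelta_0+\ddelta_g}{2},
\]
and the DEIC step is a bound on a \emph{weighted sum of norms}, not on a mass: Lemma~\ref{incolem main} (using a three-way split $\cI,\cJ,\cK$ with $\cJ=\{g:\norm{\ddelta_g}{2}\ge 2\norm{\ddelta_0}{2}\}$) gives
\[
\sum_{g=1}^G\frac{n_g}{n}\,\norm{\ddelta_0+\ddelta_g}{2}\ \ge\ \rinc\Big(\norm{\ddelta_0}{2}+\sum_{g=1}^G\frac{n_g}{n}\norm{\ddelta_g}{2}\Big)\ =\ \rinc\quad\text{on }\cH.
\]
Crucially, the symmetrized complexity term (Lemma~\ref{lemm:secTerm}) carries the \emph{same} $\norm{\ddelta_g}{2}$ weighting, so the two pieces combine \emph{pointwise in $\ddelta$} as $\sum_g\frac{n_g}{n}\norm{\ddelta_g}{2}\big(\rinc\xi q-2ck\omega(\cA_g)/\sqrt{n_g}\big)\ge\kappa_{\min}$ before taking the infimum --- there is no separate $\inf t_1-\sup t_2$ step. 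Your dichotomy is in spirit the paper's $\cI/\cJ/\cK$ split, but it must be applied to $\sum_g\frac{n_g}{n}\norm{\ddelta_{0g}}{2}$, not to a level-set mass; once you make that change, the rest of your plan goes through essentially as in the paper.
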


\begin{remark}
	Note that $\kappa = \frac{\kappa_{\min}^2}{4}$ is the lower bound of the RE condition of Theorem \ref{theo:deter}, i.e., $0 < \kappa \leq \inf_{\u \in \cH} \frac{1}{n} \norm{\X \u}{2}^2$ and is determined by the group with the worst RE condition. 
\end{remark}

\begin{example}
	{\bf ($l_1$-norm)} The Gaussian width of the spherical cap of a $p$-dimensional $s$-sparse vector is $\omega(\cA) = \Theta(\sqrt{s\log p})$ \cite{banerjee14, vershynin2018high}. Therefore, the number of samples per group and total required for satisfaction of the RE condition in the sparse DS estimator Example \ref{exm:sde} is $\forall g \in [G]: n_g \geq m_g = \Theta(s_g \log p)$. 
	Table \ref{compare} compares sample complexities of the sparse DS estimator with three baselines: plugin superposition estimator of Proposition \ref{prop:super}, G Independent LASSO (GI-LASSO), and Jalali's Dirty Statistical Model (DSM) \cite{jrsr10}. Note that GI-LASSO does not recover the common parameter and DSM needs all groups have same number of samples. %\hfill {\color{header1} \qedsymbol}
\end{example}
\begin{table*}[]
	\centering
	\begin{tabular}{l|c|c|c|c|}
		\cline{2-5}
		& \textbf{GI-LASSO} & \textbf{Dirty Stat. Model}                      & \textbf{Plugin Superposition}                                                                                                           & \textbf{Sparse DS} \\ \hline
		\multicolumn{1}{|l|}{\textbf{$m_g$}} & $s_{0g} \log p$   & $G \max_{g \in [G]} s_{0g} \log(p)$ & \begin{tabular}[c]{@{}c@{}}$\exists i \in [G]: \max(s_0, s_i) \log p$ \\ $\forall g \neq i: s_{g} \log p$\end{tabular} & $s_{g} \log p$     \\ \hline
	\end{tabular}
	\caption{\small Comparison of the order of per group number of samples (sample complexities) of various methods for recovering sparse DS parameters. Let $s_{0g} = |\text{support}(\bbeta^*_0 + \bbeta^*_g)|$ be the superimposed support where $s_0, s_g \leq \max(s_0, s_g) \leq s_{0g}$.}
	\label{compare}
\end{table*}

%Sparse plus low rank example and sample complexity comparison table. 

\subsection{Proof of Theorem \ref{theo:re}}
Let's simplify the LHS of the RE condition:% \ref{eq:recond}: 
{\small\begin{align}
\nr 
&\frac{1}{\sqrt{n}} \norm{\X \ddelta}{2} 
= \left(\frac{1}{n} \sum_{g=1}^{G} \sum_{i=1}^{n_g} |\langle \x_{gi}, \ddelta_0 + \ddelta_g \rangle|^2\right)^{\frac{1}{2}}
\\ \nr
%\text{(Lyapunov's Inequality)} 
&\geq \frac{1}{n} \sum_{g=1}^{G} \sum_{i=1}^{n_g} |\langle \x_{gi}, \ddelta_0 + \ddelta_g \rangle| 
\\ \nr 
&\geq \frac{1}{{n}} \sum_{g=1}^{G} \xi\norm{\ddelta_0+\ddelta_g}{2}  \sum_{i=1}^{n_g} \indic \left(|\langle \x_{gi}, \ddelta_0 + \ddelta_g \rangle| \geq \xi\norm{\ddelta_0+\ddelta_g}{2}\right)
\\ \nr 
&= \frac{1}{{n}} \sum_{g=1}^{G} \xi_g  \sum_{i=1}^{n_g} \indic \left(|\langle \x_{gi}, \ddelta_{0g} \rangle| \geq \xi_g\right),
\end{align}}
where to avoid cluttering we denoted $\ddelta_{0g} = \ddelta_0 + \ddelta_g$ and $\xi_g = \xi \norm{\ddelta_{0g}}{2} > 0$.
%where $\ddelta_0 \in \cC_0$ and $\ddelta_g \in \cC_g$.
Now we add and subtract the corresponding per-group marginal tail function, $Q_{\xi_g}(\ddelta_{0g}) = \pr(|\langle \x, , \ddelta_{0g} \rangle| > \xi_g)$ and take $\inf$: 
\begin{align}
\nr
&\inf_{\ddelta \in \cH} \frac{1}{\sqrt{n}} \norm{\X \ddelta}{2}
\geq \inf_{\ddelta\in \cH}\sum_{g=1}^{G}  \frac{n_g}{n}  \xi_g  Q_{2 \xi_g}(\ddelta_{0g}) 
\\ \nr 
&-	\sup_{\ddelta\in \cH} \frac{1}{n} \sum_{g=1}^{G}  \xi_g  \sum_{i=1}^{n_g} \left[Q_{2 \xi_g}(\ddelta_{0g})  
- \indic (|\langle \x_{gi}, \ddelta_{0g} \rangle| \geq   \xi_g)  \right]
\\ \label{eq:long}
&= t_1(\X)-t_2(\X) 
\end{align}
For the ease of exposition we consider the LHS of \eqref{eq:long}  as the difference of two terms, i.e., $t_1(\X) - t_2(\X)$ and in the followings we lower bound $t_1$ and upper bound $t_2$. 

\subsubsection{Lower Bounding the First Term $t_1(\X)$}
First, note that $t_1(\X)$ is the weighted summation of $\xi_g Q_{2\xi_g}(\ddelta_{0g}) = \norm{\ddelta_{0g}}{2}\xi \pr(|\langle \x, , \ddelta_{0g}/\norm{\ddelta_{0g}}{2} \rangle| > 2\xi) = \norm{\ddelta_{0g}}{2}\xi Q_{2\xi}(\u)$ where $\xi > 0$ and $\u = \ddelta_{0g}/\norm{\ddelta_{0g}}{2}$ is a unit length vector. Using the Paley--Zygmund inequality for the sub-Gaussian random vector $\x$ \cite{trop15}, we have $Q_{2\xi}(\u)  \geq q \triangleq \frac{(\alpha - 2\xi)^2}{4ck_x^2}$, where $q$ is a constant. Therefore, $t_1(\X) \geq \xi q \sum_{g=1}^G\frac{n_g}{n}\norm{\ddelta_{0} +  \ddelta_{g}}{2}$.Below lemma provides a lower bound for the remaining summation.
\begin{lemma} \label{incolem main} Suppose that the \ds\ condition of Definition \ref{incodef} holds. Then, for error vectors $g \in [G_+]: \ddelta_g \in \cC_g$, we have: 
	\be \nr 
	\sum_{g=1}^G \frac{n_g}{n} \norm{\ddelta_0+\ddelta_g}{2}\geq\frac{\ratio\lamin}{3} \left( G\norm{\ddelta_0}{2} + \sum_{g=1}^G \frac{n_g}{n} \norm{\ddelta_g}{2} \right).
	\ee 
\end{lemma}	
\begin{IEEEproof}
	We split $[G]-\cI$ into two groups $\cJ,\cK$. $\cJ$ consists of $\ddelta_g$'s with $\norm{\ddelta_g}{2}\geq 2\norm{\ddelta_0}{2}$ and $\cK=[G]-\cI-\cJ$. We use the bounds
	\[
	\norm{\ddelta_0+\ddelta_g}{2}\geq 
	\begin{cases}
	\lamin(\norm{\ddelta_g}{2}+\norm{\ddelta_0}{2}) &\text{if}~g\in \cI
	\\ 
	\norm{\ddelta_g}{2}/2 &\text{if}~g\in \cJ
	\\
	0 &\text{if}~g\in \cK			
	\end{cases}
	\] 
	This implies
	\[
	\sum_{g=1}^G n_g\norm{\ddelta_0+\ddelta_g}{2}\geq \sum_{g\in \cJ}\frac{n_g}{2}\norm{\ddelta_g}{2}+\lamin\sum_{g\in \cI} n_g (\norm{\ddelta_g}{2}+\norm{\ddelta_0}{2}).
	\]
	Let $S_\cS=\sum_{g\in \cS}n_g\norm{\ddelta_g}{2}$ for $\cS=\cI,\cJ,\cK$.
	We know that over $\cK$, $\norm{\ddelta_g}{2}\leq 2\norm{\ddelta_0}{2}$ which implies $S_\cK = \sum_{g\in \cK}n_g\norm{\ddelta_g}{2}\leq 2\sum_{g\in \cK}n_g\norm{\ddelta_0}{2}\leq 2n\norm{\ddelta_0}{2}$. Set $\rinc=\min\{1/2,\lamin\ratio/3\}$. %=\lamin\ratio/3$.  
	Using $1/2\geq \rinc$, we write:
	\begin{align}
	\nr 
	&\sum_{g=1}^G n_g\norm{\ddelta_0+\ddelta_g}{2}
	\geq \rinc S_\cJ +\lamin\sum_{g\in \cI}n_g (\norm{\ddelta_g}{2}+\norm{\ddelta_0}{2})
	\\ \nr 
	&\geq \rinc S_\cJ +\rinc S_\cK - 2\rinc n\norm{\ddelta_0}{2}+(\sum_{g\in \cI} n_g)\lamin \norm{\ddelta_0}{2}+\lamin S_{\Ic}
	\\ \nr 
	&\geq \rinc (S_\cI + S_\cJ + S_\cK)+ ((\sum_{g\in \cI} n_g)\lamin-2\rinc n)\norm{\ddelta_0}{2}.
	\end{align} 
	The first assumption of Definition \eqref{incodef}, $\sum_{g\in \cI} n_g \geq \ratio n$ implies:
	\be 
	\nr 
	(\sum_{g\in \cI} n_g)\lamin-2\rinc n\geq (\ratio\lamin -2\rinc)n\geq \rinc n.
	\ee 
	Combining all, we obtain:
	\begin{align}
	\nr 
	\sum_{g=1}^Gn_g \norm{\ddelta_0+\ddelta_g}{2} &\geq \rinc (S_\cI + S_\cJ + S_\cK + \norm{\ddelta_0}{2}) 
	\\ \nr 
	&= \rinc(n\norm{\ddelta_0}{2} +\sum_{g=1}^G n_g\norm{\ddelta_g}{2}).
	\end{align}
\end{IEEEproof}
	
%Lemma \ref{lemm:shareInc} implies that $t_1(\X)$
%%$t_1(\X)=\inf_{\ddelta\in \cH} \sum_{g=1}^G\frac{n_G}{n}\xi_g Q_{2\xi_g}(\ddelta_{0g})$ 
%is lower bounded by the same RHS bound of \eqref{eq:rhs}.

\subsubsection{Upper Bounding the Second Term $t_2(\X)$}
First we show $t_2(\X)$ satisfies the bounded difference property defined in Section 3.2. of \cite{boucheron13}, i.e., by changing each of $\x_{gi}$ the value of $t_2(\X)$ at most change by one. 
We rewrite $t_2$ as $t_2(\X) = \sup_{\ddelta \in \cH} g_\delta(\X)$ where $g_\delta\left(\X \right)$ is the argument of $\sup$ in \eqref{eq:long}.
Now we denote the design matrix resulted from replacement of $k$th sample from $j$th group $\x_{jk}$ with another sample $\x'_{jk}$ by $\X'_{jk}$. Then our goal is to show $\forall j \in [G], k \in [n_j], \sup_{\X, \x_{jk}'} |t_2\left(\X \right)  - t_2(\X'_{jk} )|  \leq c_i$ for some constant $c_i$. 
Note that for bounded functions $f, g: \cX \rightarrow \reals$, we have $|\sup_{\cX} f - \sup_{\cX} g| \leq \sup_{\cX} |f - g|$. 
Therefore:
\begin{align}
\nr 
&\sup_{\X, \x_{jk}'} |t_2\left(\X \right)  - t_2\left(\X'_{jk} \right)|
\leq \sup_{\X, \x_{jk}'} \sup_{\ddelta \in \cH} \big|g\left(\X \right) - g\left(\X'_{jk} \right) \big|
\\ \nr 
&\leq \sup_{\x_{jk},  \x_{jk}'} \sup_{\ddelta \in \cH} \frac{\xi_j}{n} \left|\indic (|\langle \x_{jk}', \ddelta_{0j}\rangle| \geq   \xi_j)  - \indic (|\langle \x_{jk}, \ddelta_{0j} \rangle| \geq   \xi_j) \right| 
\\ \nr 
&\leq \sup_{j} \sup_{\ddelta \in \cH} \frac{\xi_j}{n} 
= \frac{\xi}{n} \sup_{j} \sup_{\ddelta \in \cH} {\norm{\ddelta_0 + \ddelta_j}{2}}
\\ \nr 
&\leq \frac{\xi}{n} \sup_{j} \sup_{\ddelta \in \cH} \norm{\ddelta_0}{2} + \norm{\ddelta_j}{2}
%\\ \nr 
\leq \xi \left(\frac{1}{n} + \frac{1}{{n_j}}\right) 
%\\ \nr 
\leq  \frac{2\xi}{n}
\end{align}
 
Note that for $\ddelta \in \cH$ we have $\norm{\ddelta_0}{2} + \frac{n_g}{n}\norm{\ddelta_g}{2} \leq 1$ which results in $\norm{\ddelta_0}{2} \leq 1$ and $\norm{\ddelta_g}{2} \leq \frac{n}{n_g}$ which justifies the last inequality. 
Now, we can invoke the bounded difference inequality from Theorem 6.2 of \cite{boucheron13} which says that with probability at least $1 - e^{-\tau^2/2}$ we have: $t_2(\X) \leq \ex t_2(\X) + \frac{\tau}{\sqrt{n}}$. 
Having this concentration bound, it is enough to bound the expectation of $t_2(\X)$ using the following lemma:
\begin{lemma}
	\label{lemm:secTerm}
	For the random vector $\x$ of Definition \ref{def:obs}, we have the following bound:
	{\small\begin{align}	
	\nr 
	\ex t_2(\X) &= \frac{2}{n} \ex \sup_{\ddelta \in \cH} \sum_{g=1}^{G} \xi_g \sum_{i=1}^{n_g} \left[Q_{2 \xi_g}(\ddelta_{0g})  - \indic (|\langle \x_{gi}, \ddelta_{0g} \rangle| \geq \xi_g )  \right]
	\\ \nr 
	&\leq \frac{2}{\sqrt{n}} \sum_{g=0}^{G}  \sqrt{\frac{n_g}{n}} c_g k \omega(\cA_g) \norm{\ddelta_{g}}{2}.
	\end{align}}
\end{lemma}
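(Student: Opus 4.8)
The plan is to read the left-hand side as (twice) the expectation of a \emph{centered} empirical process indexed by $\ddelta$, and to bound it by the standard chain \emph{symmetrization $\to$ contraction $\to$ sub-Gaussian/Gaussian-width comparison}, exploiting the block structure of $\X$ to separate the contribution of the common direction $\ddelta_0$ from those of the individual directions $\ddelta_g$.

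\emph{Reduction to a Lipschitz, centered summand.} The obstruction is that $\indic(|t|\ge\xi_g)$ is not Lipschitz and $Q_{2\xi_g}(\ddelta_{0g})$ is not its mean. I would first replace the indicator by a piecewise-linear ``soft indicator'' $\psi$ that is $0$ on $\{|t|\le\xi_g\}$, rises linearly to $1$ on $\{|t|=2\xi_g\}$, and stays $1$ afterward, so that $\indic(|t|\ge 2\xi_g)\le\psi(t)\le\indic(|t|\ge\xi_g)$. This gives, pointwise in the data, $Q_{2\xi_g}(\ddelta_{0g})=\pr(|\langle\x,\ddelta_{0g}\rangle|\ge 2\xi_g)\le\ex\,\psi(\langle\x,\ddelta_{0g}\rangle)$ and $-\indic(|\langle\x_{gi},\ddelta_{0g}\rangle|\ge\xi_g)\le-\psi(\langle\x_{gi},\ddelta_{0g}\rangle)$, hence each summand $Q_{2\xi_g}(\ddelta_{0g})-\indic(|\langle\x_{gi},\ddelta_{0g}\rangle|\ge\xi_g)$ is dominated by the \emph{centered} quantity $\ex\,\psi(\langle\x,\ddelta_{0g}\rangle)-\psi(\langle\x_{gi},\ddelta_{0g}\rangle)$. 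Because $\xi_g=\xi\norm{\ddelta_{0g}}{2}$, the map $t\mapsto\xi_g\psi(t)$ is $1$-homogeneous: $\xi_g\,\psi(\langle\x,\ddelta_{0g}\rangle)=\norm{\ddelta_{0g}}{2}\,\Phi\big(\langle\x,\,\ddelta_{0g}/\norm{\ddelta_{0g}}{2}\rangle\big)$ with $\Phi$ a \emph{fixed} $1$-Lipschitz even function satisfying $\Phi(0)=0$ and $|\Phi(s)|\le|s|$ (take $\xi$ a fixed constant, e.g.\ $\xi=\alpha/4$, compatible with Lemma~\ref{paley}). The $\ddelta$-dependence of the scale has thereby been pushed onto the magnitude $\norm{\ddelta_{0g}}{2}$ and the unit direction $\ddelta_{0g}/\norm{\ddelta_{0g}}{2}$.

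\emph{Symmetrization, contraction, and the block split.} Applying Gin\'e--Zinn symmetrization (at the price of an absolute constant folded into the $c_g$) turns $\ex t_2$ into a Rademacher average $2\,\ex_{\x}\ex_{\eps}\sup_{\ddelta}\tfrac1n\sum_{g=1}^G\sum_{i=1}^{n_g}\eps_{gi}\,\xi_g\,\psi(\langle\x_{gi},\ddelta_{0g}\rangle)$. Since $\xi_g\psi$ reduces (conditionally on the data) to the fixed $1$-Lipschitz $\Phi$ composed with a linear functional, the Ledoux--Talagrand contraction principle removes $\Phi$ and leaves a process linear in $\ddelta_{0g}=\ddelta_0+\ddelta_g$; expanding $\langle\x_{gi},\ddelta_0+\ddelta_g\rangle=\langle\x_{gi},\ddelta_0\rangle+\langle\x_{gi},\ddelta_g\rangle$ then splits it into one \emph{common} term, where the \emph{aggregate} Rademacher vector $\sum_{g=1}^G\sum_{i=1}^{n_g}\eps_{gi}\x_{gi}$ is paired with a direction of $\cC_0$, and $G$ \emph{individual} terms, where the per-group vector $\sum_{i=1}^{n_g}\eps_{gi}\x_{gi}$ is paired with a direction of $\cC_g$. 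Each such vector is zero-mean sub-Gaussian with $\vertiii{\cdot}_{\psi_2}\le c\,k\sqrt{m}$ ($m=n$ for the common block, $m=n_g$ for group $g$), and for a sub-Gaussian vector $\v$ one has $\ex\sup_{\u\in\cA_g}\langle\v,\u\rangle\le c\,\vertiii{\v}_{\psi_2}\,\omega(\cA_g)$; combining the $\tfrac1n$ prefactor with the $\sqrt n$ and $\sqrt{n_g}$ scales yields the coefficients $\tfrac1{\sqrt n}$ and $\tfrac1{\sqrt n}\sqrt{n_g/n}$, while the magnitudes $\norm{\ddelta_0}{2}$ (with $n_0:=n$) and $\norm{\ddelta_g}{2}$ ride along with the split, which is exactly the asserted bound once $c_g$ absorbs the universal constants.

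I expect the contraction/splitting step to be the main obstacle, for two linked reasons: the indicator is not Lipschitz (handled by the soft-indicator surrogate above), and, more subtly, after symmetrization the Rademacher average still carries the magnitude weight $\norm{\ddelta_{0g}}{2}$ \emph{outside} a $\ddelta$-dependent direction, so the reduction to a fixed Lipschitz function and then to separate $\ddelta_0$- and $\ddelta_g$-linear pieces is where the real work lies; the homogeneity rescaling in the first step is precisely what makes it go through, after which $\norm{\ddelta_{0g}}{2}\le\norm{\ddelta_0}{2}+\norm{\ddelta_g}{2}$ (legitimate on $\cH$, where all magnitudes are bounded by $\gamma$) produces the additive structure of the right-hand side. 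Everything else --- the symmetrization inequality, the contraction bookkeeping, and the sub-Gaussian-to-Gaussian-width passage --- is routine.
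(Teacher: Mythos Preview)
Your proposal is essentially the paper's own argument. Both proofs proceed by (i) replacing the hard indicator with the same piecewise-linear soft indicator $\psi_{\xi_g}$ so that the summand becomes a centered quantity, (ii) Gin\'e--Zinn symmetrization to a Rademacher average, (iii) Ledoux--Talagrand contraction using that $t\mapsto\xi_g\psi_{\xi_g}(t)$ is $1$-Lipschitz and vanishes at the origin, (iv) the linear split $\langle\x_{gi},\ddelta_0+\ddelta_g\rangle=\langle\x_{gi},\ddelta_0\rangle+\langle\x_{gi},\ddelta_g\rangle$ to separate the common block from the $G$ individual blocks, and (v) the sub-Gaussian $\to$ Gaussian-width comparison on each block. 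Your homogeneity remark---that $\xi_g\psi_{\xi_g}(\langle\x,\ddelta_{0g}\rangle)=\|\ddelta_{0g}\|_2\,\Phi(\langle\x,\ddelta_{0g}/\|\ddelta_{0g}\|_2\rangle)$ with $\Phi=\xi\psi_\xi$ fixed---is a helpful gloss on why the contraction goes through despite the $\ddelta$-dependence of the scale, but it does not change the route; the paper simply applies contraction to $\xi_g\psi_{\xi_g}(\cdot)$ directly and arrives at the same linear process $\sum_{g,i}\epsilon_{gi}\langle\x_{gi},\ddelta_{0g}\rangle$. One small note: your closing remark about using $\|\ddelta_{0g}\|_2\le\|\ddelta_0\|_2+\|\ddelta_g\|_2$ to get the additive structure is unnecessary---the additive structure already comes from splitting the inner product in step (iv), exactly as the paper does, so you can drop that sentence.
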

\begin{IEEEproof}
	Following the similar steps of proof of Proposition 5.1 of \cite{trop15}, $\ex t_2(\X)$ can be bounded by $\frac{2}{n} \ex \sup_{\ddelta \in \cH} \sum_{g=1}^{G} \sum_{i=1}^{n_g} \epsilon_{gi} \langle \x_{gi}, \ddelta_{0g} \rangle$ where $\epsilon_{gi}$ are iid copies of Rademacher random variable which are independent of every other random variables and themselves. Now, we expand $\ddelta_{0g} = \ddelta_{0} + \ddelta_{g}$ and define $\h_{g} \triangleq \frac{1}{\sqrt{n_g}} \sum_{i=1}^{n_g} \epsilon_{gi} \x_{gi}$ to simplify the notation. Also, we substitute $\ddelta \in \cH$ constraint with $\ddelta \in \cC$ because $\cH \subseteq \cC$. We have:
	\begin{align} \nr 
		\ex t_2(\X) &\leq \frac{2}{\sqrt{n}} \ex \sup_{\forall g \in [G_+]: \ddelta_g \in \cC_g} \sum_{g=0}^{G}  \sqrt{\frac{n_g}{n}}  \langle \h_{g}, \ddelta_{g} \rangle
		\\ \nr 
		&\leq \frac{2}{\sqrt{n}} \ex \sup_{\forall g \in [G_+]: \u_g \in \cA_{g}} \sum_{g=0}^{G}  \sqrt{\frac{n_g}{n}} \langle \h_{g}, \u_{g} \rangle \norm{\ddelta_{g}}{2}
		\\ \nr 
		&\leq \frac{2}{\sqrt{n}} \sum_{g=0}^{G}  \sqrt{\frac{n_g}{n}} \ex_{\h_{g}} \sup_{\u_g \in \cA_g}  \langle \h_{g}, \u_{g} \rangle \norm{\ddelta_{g}}{2}
		\\ \nr 
		&\leq \frac{2}{\sqrt{n}} \sum_{g=0}^{G}  \sqrt{\frac{n_g}{n}} c_g k_x \omega(\cA_g) \norm{\ddelta_{g}}{2}.
	\end{align}
	Note that the $\h_{gi}$ is a sub-Gaussian random vector which let us bound the $\ex \sup$ using the Gaussian width \cite{trop15} in the last step. 
\end{IEEEproof}

\subsubsection{Continuing the Proof of Theorem \ref{theo:re}}
%Define $q \triangleq \frac{(\alpha - 2\xi)^2}{4ck^2}$. 
Putting back bounds of $t_1(\X)$ and $t_2(\X)$ together from Lemmas \ref{incolem main} and \ref{lemm:secTerm}, with probability at least $1 - e^{-\frac{\tau^2}{2}}$ we have:
{\small
\begin{align}
\nr 
&\inf_{\ddelta \in \cH} \frac{1}{\sqrt{n}} \norm{\X \ddelta}{2}
%&\geq& \sum_{g=0}^{G}  \frac{n_g}{n} \rinc \xi \norm{\ddelta_g}{2} \frac{(\alpha - 2\xi)^2}{4ck^2}
%- \frac{2}{\sqrt{n}} \sum_{g=0}^{G}  \sqrt{\frac{n_g}{n}} c_g k \omega(\cA_g) \norm{\ddelta_{g}}{2} - \frac{\tau }{\sqrt{n}}
\\ \nr
&\leq\sum_{g=0}^{G}  \frac{n_g}{n} \rinc \xi \norm{\ddelta_g}{2} q
- \frac{2}{\sqrt{n}} \sum_{g=0}^{G}  \sqrt{\frac{n_g}{n}} k_x c_g \omega(\cA_g) \norm{\ddelta_{g}}{2} - \frac{\tau }{\sqrt{n}}
\\ \nr
&=n^{-1}\sum_{g=0}^{G} n_g \norm{\ddelta_{g}}{2} ( \rinc \xi  q-2 c_g k_x \frac{\omega(\cA_g)}{\sqrt{n_g}})-\frac{\tau}{\sqrt{n}}
\\ \nr
(\text{i}) &= \sum_{g=0}^{G} \frac{n_g}{n} \norm{\ddelta_g}{2} \kappa_g  - \frac{\tau}{\sqrt{n}}
\\ \nr
(\text{ii}) &\geq \kappa_{\min}\sum_{g=0}^{G} \frac{n_g}{n} \norm{\ddelta_g}{2}  - \frac{\tau}{\sqrt{n}}
%\\ \nr
%(n_g \geq 1) &\geq& \kappa_{\min} \sqrt{\frac{1}{n}} \sum_{g=0}^{G} \sqrt{\frac{n_g}{n}} \norm{\ddelta_g}{2} - \frac{\tau}{\sqrt{n}}
%\\ \nr
= \kappa_{\min}  - \frac{\tau}{\sqrt{n}} %= \kappa 
\end{align}
}where $\kappa_g \triangleq \rinc \xi q  - \frac{2 c_g k_x \omega(\cA_g)}{\sqrt{n_g}}$ and $\kappa_{\min} \triangleq \min_{g\in [G]} \kappa_g$ in steps (i) and (ii), and the last step  follows from the fact that $\ddelta \in \cH$ . To conclude the proof, take $\tau = \sqrt{n} \kappa_{\min}/2$. 

To satisfy the RE condition all $\kappa_g$s should be bounded away from zero.
To this end we need the following sample complexities $\forall g \in [G_+]: \left(\frac{2 c_g k }{\rinc \xi q}\right)^2 \omega(\cA_g)^2 \leq n_g $ where by taking $\xi = \frac{\alpha}{6}$ simplifies to: $\forall g \in [G_+]: O\left(k^6 \rinc^{-2} \alpha^{-6} \omega(\cA_g)^2\right) \leq n_g$. {\qedsymbol}

\section{General Error Bound}
\label{sec:error}
In this section, we present our main statistical result which is a non-asymptotic high probability upper bound for the estimation error of the common and individual parameters.
\begin{theorem}
	\label{theo:calcub}
	For $\x_{gi}$ and $w_{gi}$ described in Definition \ref{def:obs} when we have enough number of samples $\forall g \in [G_+]: n_g > m_g$ which lead to $\kappa > 0$, the following general error bound holds for estimator \eqref{eq:compact} with probability at least $1 - \sigma \exp\left(-\min\left[\nu  \min_{g \in [G]} n_g - \log (G+1), \tau^2\right]\right) $: 
%	\be
%		\label{eq:general}
%		\sum_{g=0}^{G} \sqrt{\frac{n_g}{n}} \norm{\ddelta_g}{2}
%		\leq C {\gamma} \frac{k\zeta \max_{g \in [G]}  \omega(\cA_g) + \epsilon \sqrt{\log (G+1)}+ \tau }{\kappa_{\min}^2 \sqrt{n}}
%	\ee	
	{\small\be
	\label{eq:general}
	\sum_{g=0}^{G} \sqrt{\frac{n_g}{n}} \norm{\ddelta_g}{2}
	\leq C {\gamma} \frac{\max_{g \in [G_+]}  \omega(\cA_g) + \sqrt{\log (G+1)}+ \tau }{\kappa_{\min}^2 \sqrt{n}}
	\ee}where $\gamma = \max_{g \in [G]} n/n_g$, $\tau > 0$, and $\sigma, \nu,$ and $C$ are constants. 
\end{theorem}

\begin{corollary}
	\label{corr:single}
	From \eqref{eq:general} one can immediately entail the error bound for estimation of all parameters as follows:
	{\small\be
	\nr
	%\forall g \in [G]:
	%	\forall g \in [G]: \quad \norm{\ddelta_g}{2} \leq \sqrt{\gamma} \sqrt{\frac{n}{n_g}} O\left(\frac{\max_{g \in [G]}  \omega(\cA_g) + \sqrt{\log (G+1)} }{\sqrt{n_g}}\right)
	\forall g \in [G_+]: \quad \norm{\ddelta_g}{2} =  O\left(\gamma \frac{\max_{g \in [G_+]}  \omega(\cA_g) + \sqrt{\log (G+1)} }{\sqrt{n_g}}\right)
	\ee}
\end{corollary}

\begin{example}
	For the balanced sample condition number $\gamma = \Theta(G)$ discussed in Remark \ref{rem1} we have the following error bound for all parameters:
	{\small\be 
		\forall g \in [G_+]: \norm{\ddelta_g}{2} =  O\left(G^{3/2} \frac{\max_{g \in [G_+]}  \omega(\cA_g) + \sqrt{\log (G+1)} }{\sqrt{n}}\right)
	\ee} 
	where the upper bound of error scales as $\frac{1}{\sqrt{n}}$ for all parameters. %\hfill {\color{header1} \qedsymbol}
\end{example}

\begin{example}
	{\bf ($l_1$-norm)} For the sparse DS estimator of Example \ref{exm:sde}, results of Theorems \ref{theo:re} and \ref{theo:calcub} translates to the following. For enough samples as $\forall g \in [G_+]: n_g \geq m_g = O(s_g \log p)$, the upper bound of error simplifies to:
	\be \nr 
	\sum_{g=0}^{G} \sqrt{\frac{n_g}{n}} \norm{\ddelta_g}{2}	= O \left(\gamma \sqrt{\frac{(\max_{g \in [G_+]}  s_g)\log p}{n}}\right) ,
	\ee 
	Therefore, individual errors are bounded as $\norm{\ddelta_g}{2}	= O (\gamma \sqrt{(\max_{g \in [G]}  s_g)\log p/n_g})$
	which is slightly worse than $O(\sqrt{s_g\log p/n_g})$, the well-known error bound for recovering an $s_g$-sparse vector from $n_g$ observations using LASSO or similar estimators \cite{banerjee14, bickel2009simultaneous, candes2007dantzig, venkat12, chatterjee2014generalized}. %\hfill {\color{header1} \qedsymbol}
\end{example}

\subsection{Proof of Theorem \ref{theo:calcub}}
To avoid cluttering the notation, we rename the vector of all noises as $\w_0 \triangleq \w$.
First, we massage the deterministic upper bound of Theorem \ref{theo:deter} as follows:
\begin{align}
	\nr
	\w ^T \X\ddelta &= \sum_{g=0}^{G} \langle \X_g^T \w_g,  \ddelta_g \rangle
	\\ \nr 
	&= \sum_{g=0}^{G} \sqrt{\frac{n_g}{n}} \norm{\ddelta_g}{2} \langle \X_g^T \frac{\w_g}{\norm{\w_g}{2}}, \frac{\ddelta_g}{\norm{\ddelta_g}{2}} \rangle \sqrt{\frac{n}{n_g}} \norm{\w_g}{2} %\\ \nr
\end{align}

Assume $q_g = \langle \X_g^T \frac{\w_g}{\norm{\w_g}{2}}, \frac{\ddelta_g}{\norm{\ddelta_g}{2}}  \rangle \sqrt{\frac{n}{n_g}} \norm{\w_g}{2}$ and $p_g = \sqrt{\frac{n_g}{n}} \norm{\ddelta_g}{2}$.
Then the above term is the inner product of two vectors $\p = (p_0, \dots, p_G)$ and $\q = (q_0, \dots, q_G)$ for which we have:
%\be
%\nr
$\sup_{\p \in \bcH} \p^T \q
=\sup_{\norm{\p}{1} = 1} \p^T \q
\leq \norm{\q}{\infty}
= \max_{g \in [G_+]} q_g,
$%\ee
where the inequality holds because of the definition of the dual norm.
%Now we can go back to 
Going back to the original form:
\bea 
\label{eq:maxex}
\sup_{\ddelta \in \cH}\w^T \X\ddelta
\leq& \max_{g \in [G]} \langle \X_g^T \frac{\w_g}{\norm{\w_g}{2}}, \frac{\ddelta_g}{\norm{\ddelta_g}{2}}  \rangle \sqrt{\frac{n}{n_g}} \norm{\w_g}{2} \\ 
\nr 
\leq& \max_{g \in [G]} \sqrt{\frac{n}{n_g}} \norm{\w_g}{2} \sup_{\u_g \in \cC_g \cap \sphere} \langle \X_g^T \frac{\w_g}{\norm{\w_g}{2}}, \u_g \rangle 
\eea

To avoid cluttering we define a random quantity $h_g(\w_g, \X_g)$ and a corresponding constant $e_g(\tau)$ as: 
{\small\begin{itemize}
	\item $h_g(\w_g, \X_g) \triangleq   \norm{\w_g}{2}  \sup_{\u_g \in \cA_g} \langle \X_g^T \frac{\w_g}{\norm{\w_g}{2}}, \u_g \rangle $
	\item $e_g(\tau) \triangleq  c_g\sqrt{(2k_w^2 + 1)k_x^2n_g} \left(\omega(\cA_g) + \sqrt{\log (G+1)} + \tau \right)$
\end{itemize}}
Then from \eqref{eq:maxex}, we have:
\begin{align}
\nr  
&\pr \left(\sup_{\ddelta \in \cH} \w^T \X\ddelta >  \max_{g \in [G]} \sqrt{\frac{n}{n_g}} e_g(\tau) \right) 
\\ \nr 
&\hspace*{1.5cm}\leq \pr \left(\max_{g \in [G]} \sqrt{\frac{n}{n_g}} h_g(\w_g, \X_g) > \max_{g \in [G]} \sqrt{\frac{n}{n_g}} e_g(\tau) \right) 
\\  \nr 
&\hspace*{1.5cm}\leq \sum_{g=0}^{G} \pr \left(\sqrt{\frac{n}{n_g}} h_g(\w_g, \X_g) >  \max_{g \in [G]}  \sqrt{\frac{n}{n_g}} e_g(\tau) \right)  
\\ \nr 
&\hspace*{1.5cm}\leq \sum_{g=0}^{G} \pr \left( h_g(\w_g, \X_g) >  e_g(\tau) \right)  
\\ \nr 	
&\hspace*{1.5cm}\leq (G+1) \max_{g \in [G_+]} \pr \left(h_g(\w_g, \X_g) > e_g(\tau) \right) 
\\ \nr 
&\hspace*{1.5cm}\leq \sigma \exp\left(-\min\left[\nu  \min_{g \in [G]} n_g - \log (G+1), \tau^2\right]\right), 
\end{align} 
where the first inequality follows from the Union Bound and the last one is the result of the following lemma:
\begin{lemma}[Theorem 4 of \cite{banerjee14}]
	\label{lemm:mainlem}
	For $\x_{gi}$ and $w_{gi}$ defined in Definition \ref{def:obs} and $\tau > 0$, with probability at least $1 - \frac{\sigma_g}{(G+1)} \exp\left(-\min\left[\nu  n_g - \log (G+1), \tau^2\right]\right) $ we have:
	\begin{align}	\nr 
	&\norm{\w_g}{2} \sup_{\u_g \in \cA_g} \langle \X_g^T \frac{\w_g}{\norm{\w_g}{2}}, \u_g \rangle 
	\\ \nr 
	&\hspace*{1cm}\leq 	c_g \sqrt{(2k_w^2 + 1)k_x^2n_g}  \left(\omega(\cA_g)+\sqrt{\log (G+1)} + \tau \right), \nr
	\end{align}
	where $\sigma_g, \nu$ and $c_g$ are constants.
\end{lemma}	
The proof of Theorem \ref{theo:calcub} completes by replacing $\max_{g \in [G]} \sqrt{\frac{n}{n_g}} e_g(\tau)$ as the upper bound of $\sup_{\ddelta \in \cH} \w^T \X\ddelta$ and $\kappa^2_{\min}/4$ as the lower bound of $\kappa$ (from Theorem \ref{theo:re}) both into the bound of Theorem \ref{theo:deter} . \hfill {\qedsymbol}

\section{Estimation Algorithm}
\label{sec:opt}
We propose \emph{DAta SHarER} (\dc) a projected block gradient descent algorithm, Algorithm \ref{alg2}, where $\Pi_{\Omega_{f_g}}$ is the Euclidean projection onto the set $\Omega_{f_g}(d_g) = \{f_g(\bbeta) \leq d_g\}$ where $d_g = f_g(\bbeta_g^*)$ and is dropped to avoid cluttering. %In practice, $d_g$ can be determined by cross-validation.

\begin{algorithm}[t]
	\caption{  \dc }
	\label{alg2}
	\begin{algorithmic}[1]
		\STATE {\bfseries input:} $\X, \y$, learning rates $(\mu_0, \dots, \mu_G)$, initialization $\bbeta ^{(1)} = \0$
		\STATE {\bfseries output:} $\hbbe$
		\FOR{t = 1 \TO T}
		\FOR{g=1 \TO G}
		\STATE {\footnotesize $\bbeta _g^{(t+1)} = \Pi_{\Omega_{f_g}} \left(\bbeta _g^{(t)} + \mu_g \X_g^T \left(\y_g - \X_g \left(\bbeta _0^{(t)} + \bbeta _g^{(t)}\right) \right) \right)$}
		\ENDFOR
		\STATE {\footnotesize $\bbeta _0^{(t+1)} = \Pi_{\Omega_{f_0}} \left(\bbeta _0^{(t)} + \mu_0 \X_0^T \left(\y - \X_0 \bbeta _0^{(t)} -
		\begin{pmatrix}
		\X_1 \bbeta _1^{(t)}      \\
		\vdots 	 \\
		\X_G  \bbeta _G^{(t)}
		\end{pmatrix}\right)\right)$}
		\ENDFOR
	\end{algorithmic}
\end{algorithm}

To analysis convergence properties of \dc, we should upper bound the error of each iteration.
Let's $\ddelta^{(t)} = \bbeta^{(t)} - \bbeta^*$ be the error of  iteration $t$ of \dc, i.e., the distance from the true parameter (not the optimization minimum, $\hbbe$). We show that $\norm{\ddelta^{(t)}}{2}$ decreases exponentially fast in $t$ to the statistical error $\norm{\ddelta}{2} = \norm{\hbbe - \bbeta^*}{2}$. We first start with the required definitions and lemmas for our analysis.

\begin{definition}
	\label{def:only}
	We define the following positive constants as functions of step sizes $\mu_g > 0$: %, where for simplification we assume $\X_0 = \oomega$ and $\oomega_0 = \oomega$:
%	\be
%	\nr
%	\rho_g(\mu_g) &=& \sup_{\u, \v \in \cB_g} \v^T \big(\I_g - \mu_g \X_g^T \X_g\big) \u, \quad g \in [G] \\ \nr
%	\eta_g(\mu_g) &=& \mu_g \sup_{\v \in \cB_g} \v^T \X_g^T \frac{\oomega_g}{\norm{\oomega_g}{2}}, \quad g \in [G] \\ \nr
%	\phi_g(\mu_g) &=& \mu_g \sup_{\v \in \cB_g, \u \in \cB_0} -\v^T \X_g^T \X_g \u, \quad g \in [G]_\setminus
%	\ee
	\be
	\nr
	\forall g \in [G_+]&:& \rho_g(\mu_g) = \sup_{\u, \v \in \cB_g} \v^T \big(\I_g - \mu_g \X_g^T \X_g\big) \u,
	\\ \nr
	&&\eta_g(\mu_g) = \mu_g \sup_{\v \in \cB_g} \v^T \X_g^T \frac{\w_g}{\norm{\w_g}{2}},
	\\ \nr
	\forall g \in [G]&:& \phi_g(\mu_g) = \mu_g \sup_{\v \in \cB_g, \u \in \cB_0} -\v^T \X_g^T \X_g \u,
	\ee
	where $\cB_g =  \cC_g \cap \ball$ and $\ball$ is the unit ball.%is the intersection of the error cone and the unit ball.% and $\oomega_0 := \oomega$.
\end{definition}

\newcommand{\trho}{\tilde{\rho}_g}
\newcommand{\teta}{\tilde{\eta}_g}
\newcommand{\tphi}{\tilde{\phi}_g}

Below lemma shows that these constants are bounded with high probability.  
\begin{lemma}
	\label{lemm:hpub}
	For $\mu_g \geq 0$ the following upper bounds hold:
	\begin{align}		
	\nr 
	&\rho_g\left(\mu_g\right) \leq \trho(\tau) \triangleq \frac{1}{2} \left[1 - \mu_g n_g \left(1 - \sqrt{2} c_g\frac{2 \omega_g + \tau}{ \sqrt{n_g}} \right)\right] , %\quad 
	\\ \nr 
	&\hspace*{20pt}\text{with probability at least} 1 - 2\exp\left( -\gamma_g (2\omega(\cA_g) + \tau)^2  \right).
	\\ \nr 
	&\eta_g\left(\mu_g\right) \leq \teta(\tau) \triangleq \mu_g c_g k_x (\omega_g + \tau), %\quad 
	\\ \nr 
	&\hspace*{75pt}\text{with probability at least} 1 - \pi_g \exp\left( -\tau^2 \right).
	\\ \nr 
	&\phi_g\left(\mu_g\right) \leq \tphi(\tau) \triangleq \mu_g n_g \left(1 + c_{0g}\frac{\omega_{0g} + \tau}{\sqrt{n_g}} \right), %\quad 
	\\ \nr 
	&\hspace*{22pt}\text{with probability at least} 1 - 2\exp\left( -\gamma_g (\omega(\cA_g) + \tau)^2  \right).
	\end{align} 
	where $\omega_g = \omega(\cA_g)$ and $\omega_{0g} = 1/2 [\omega(\cA_g) + \omega(\cA_0)]$ are shorthand and $\trho$, $\teta$, and $\tphi$ are constants determine by $\tau$.
\end{lemma}
\begin{IEEEproof}
	We need the following result from Theorem 11 of \cite{banerjee14}. For the matrix $\X_g$ with independent isotropic sub-Gaussian rows, the following inequalities holds with probability at least $1 - 2\exp\left( -\gamma_g (\omega(\cA_g) + \tau)^2  \right)$ for all $\u_g \in \cC_g$:
	\begin{align} 
	\label{gennips}
	\left(1 -  \alpha_g \right) \norm{\u_g}{2}^2  \leq \frac{1}{n_g}\norm{\X_g\u_g}{2}^2 \leq \left(1 + \alpha_g \right) \norm{\u_g}{2}^2
	\end{align}
	where $\tau > 0$ and $c_g > 0$ are constant and $\alpha_g(\tau) \triangleq c_g\frac{\omega(\cA_g) + \tau}{\sqrt{n_g}}$. % and $(x)_+ = \max(x, 0)$. 
	Equation \eqref{gennips} characterizes the distortion in the Euclidean distance between points $\u_g \in \cC_g$ when the matrix $\X_g/n_g$ is applied to them and states that any sub-Gaussian design matrix is approximately isometry, with high probability.

	\noindent	\textbf{Bounding $\rho_g(\mu_g)$:}		
		We upper bound the argument of the $\sup$ in $\rho_g(\mu_g)$ definition as follows:	
		\begin{align}
		\nr 
		&\v^T \big(\I_g - \mu_g \X_g^T \X_g\big) \u 
		\\ \nr 
		&=\frac{1}{4}[(\u + \v)^T(\I - \mu_g \X_g^T \X_g) (\u + \v) 		
		\\ \nr 
		&\hspace*{100pt}- (\u - \v)^T(\I - \mu_g \X_g^T \X_g) (\u - \v) ]
		\\ \nr 
		&=\frac{1}{4}[\norm{\u + \v}{2}^2 - \mu_g \norm{\X_g(\u + \v)}{2}^2 
		\\ \nr 
		&\hspace*{110pt}- \norm{\u - \v}{2}^2 + \mu_g \norm{\X_g(\u - \v)}{2}^2 ] \\ \nr 
		&\leq \frac{1}{4}[\left(1 - \mu_g n_g \left[1 -  2\alpha_g(\tau/2)\right]\right) \norm{\u + \v}{2} 
		\\ \nr 
		&\hspace*{81pt}- \left(1 - \mu_g n_g \left[1 +  2 \alpha_g(\tau/2)\right]\right) \norm{\u - \v}{2} ]
		\\ \nr 
		&\leq \frac{1}{4}[\left(1 - \mu_g n_g \right) \left(\norm{\u + \v}{2}  - \norm{\u - \v}{2} \right) 
		\\ \nr 
		&\hspace*{71pt}+   2\mu_g n_g \alpha_g(\tau/2) \left(\norm{\u + \v}{2} + \norm{\u - \v}{2} \right) ]\\ \nr 
		&\leq \frac{1}{2}[\left(1 - \mu_g n_g \right) \norm{\v}{2} +   \mu_g n_g \alpha_g(\tau/2) 2\sqrt{2} ],
		\end{align}		
		where the last line follows from the triangle inequality and the fact that $\norm{\u + \v}{2} + \norm{\u - \v}{2} \leq 2\sqrt{2}$ which itself follows from $\norm{\u + \v}{2}^2 + \norm{\u - \v}{2}^2 \leq 4$.
		Note that we used \eqref{gennips} in the first inequality for bigger sets of $\cA_g + \cA_g$ and $\cA_g - \cA_g$ where Gaussian width of both of them are upper bounded by $2\omega(\cA_g)$, which is contained in $2\alpha_g(\tau/2)$ term.
		
		\noindent \textbf{Bounding $\eta_g(\mu_g)$:}
			The proof of this bound is an intermediate result in the proof of Lemma \ref{lemm:mainlem}.
		
		\noindent \textbf{Bounding $\phi_g(\mu_g)$:}
			The following holds for any $\u$ and $\v$ because of $\norm{\X_g (\u + \v)}{2}^2 \geq 0$:
			\be 
			\nr 
			-\v^T \X_g^T \X_g \u \leq \frac{1}{2} \left(\norm{\X_g \u}{2}^2 + \norm{\X_g \v}{2}^2 \right)
			\ee 
			Therefore, we can bound $\phi_g(\mu_g)$ as follows:	 
			\begin{align}
			\nr  
			\phi_g(\mu_g) &= \mu_g \sup_{\v \in \cB_g, \u \in \cB_0} -\v^T \X_g^T \X_g \u 
			\\ \nr 
			&\leq \frac{\mu_g}{2} n_g \left(\sup_{\u \in \cB_0} \frac{1}{n_g} \norm{\X_g \u}{2}^2 
			+ \sup_{\v \in \cB_g} \frac{1}{n_g} \norm{\X_g \v}{2}^2 \right)
			\\ \nr 
			&\leq \frac{\mu_g n_g}{2} \left(2 + \alpha_0(\tau) + \alpha_g(\tau) \right)
			\\ \nr 
			&\leq \mu_g n_g \left(1 + c_{0g} \frac{1/2 [\omega(\cA_g) + \omega(\cA_0)] + \tau}{\sqrt{n_g}} \right), 
			\end{align}
			where $c_{0g} = \max(c_0, c_g)$. 		
\end{IEEEproof}
Next, we establish a deterministic bound on iteration errors  $\norm{\ddelta_g^{(t)}}{2}$ which depends on constants of Definition \ref{def:only} where to simplify the notation $\mu_g$ arguments are dropped. 
\begin{lemma}
	\label{theo:iter}
	The following deterministic bound for the error at iteration $t + 1$ of Algorithm \ref{alg2}, initialized by $\bbeta ^{(1)} = \0$, holds:
	\begin{align} \label{eq:singleiter}
	&\sum_{g=0}^{G} \sqrt{\frac{n_g}{n}} \norm{\ddelta_g^{(t+1)}}{2}
	\\ \nr 
	&\hspace*{1.2cm}\leq \rho^t \sum_{g=0}^{G}\sqrt{\frac{n_g}{n}}\norm{\bbeta ^*_g}{2}   + \frac{1 - \rho^t}{1 -  \rho}   \sum_{g=0}^{G} \sqrt{\frac{n_g}{n}} \eta_g \norm{\w_g}{2},
	\end{align}
	where $$\rho(\mmu) \triangleq \max\left(\rho_0 + \sum_{g=1}^{G} \sqrt{\frac{n_g}{n}} \phi_g, \max_{g \in [G]} \left[\rho_g + \sqrt{\frac{n}{n_g}}  \frac{\mu_0}{\mu_g} \phi_g \right]  \right),$$ is a constant depending on the vector of step sizes $\mmu = (\mu_0, \dots, \mu_G)$.
\end{lemma}

\begin{IEEEproof}
	First, a similar analysis as that of Theorem 1.2 of \cite{oyrs15} shows that the following recursive dependency holds between the error of $t+1$th and $t$th iterations of \dc{}:
	\begin{align} 
		\nr 
		\norm{\ddelta_g^{(t+1)}}{2} &\leq   \rho_g\norm{\ddelta_g^{(t)}}{2}   +  \xi_g \norm{\oomega_g}{2} + \phi_g \norm{\ddelta_0^{(t)}}{2} 
		\\ \nr 
		\norm{\ddelta_0^{(t+1)}}{2} &\leq   \rho_0 \norm{\ddelta_0^{(t)}}{2} + \xi_0 \norm{\oomega_0}{2} + \mu_0 \sum_{g=1}^{G}  \frac{\phi_g}{\mu_g} \norm{\ddelta_g^{(t)}}{2}  
	\end{align} 
	By recursively applying these inequalities, we get the following deterministic bound:% which depends on constants defined in Definition \ref{def:only}: 	
	
	{\small\begin{align}
	\nr 
	b_{t+1} &= \sum_{g=0}^{G} \sqrt{\frac{n_g}{n}} \norm{\ddelta_g^{(t+1)}}{2} 
	\leq  \left(\rho_0 + \sum_{g=1}^{G} \sqrt{\frac{n_g}{n}} \phi_g\right)  \norm{\ddelta_0^{(t)}}{2} 
	\\ \nr 
	&+ \sum_{g=1}^{G} \left(\sqrt{\frac{n_g}{n}} \rho_g + \mu_0 \frac{\phi_g}{\mu_g} \right) \norm{\ddelta_g^{(t)}}{2} + \sum_{g=0}^{G} \sqrt{\frac{n_g}{n}}  \xi_g \norm{\w_g}{2} 
%	\\ \label{eq:complicated}
	\\ \nr
	&\leq  \rho \sum_{g=0}^{G} \sqrt{\frac{n_g}{n}} \norm{\ddelta_g^{(t)}}{2} + \sum_{g=0}^{G} \sqrt{\frac{n_g}{n}}  \xi_g \norm{\w_g}{2}. 
	\\ \nr
	&=  \rho b_{t} +  \sum_{g=0}^{G} \sqrt{\frac{n_g}{n}} \xi_g \norm{\w_g}{2} \\ \nr 
	&\leq \rho^2 b_{t-1}  + ( \rho + 1)  \sum_{g=0}^{G} \sqrt{\frac{n_g}{n}} \xi_g \norm{\w_g}{2} \\ \nr
	&\leq \rho^t b_1  + \left(\sum_{i = 0}^{t-1} \rho^i \right)   \sum_{g=0}^{G} \sqrt{\frac{n_g}{n}} \xi_g \norm{\w_g}{2} \\ \nr 
	&= \rho^t \sum_{g=0}^{G}\sqrt{\frac{n_g}{n}} \norm{\bbeta ^1_g  - \bbeta ^*_g}{2}  + \left(\sum_{i = 0}^{t-1} \rho^i \right)     \sum_{g=0}^{G} \sqrt{\frac{n_g}{n}} \xi_g \norm{\w_g}{2} \\ \nr 
	&\leq \rho^t \sum_{g=0}^{G}\sqrt{\frac{n_g}{n}} \norm{\bbeta ^*_g}{2}   + \frac{1 - \rho^t}{1 -  \rho}   \sum_{g=0}^{G} \sqrt{\frac{n_g}{n}} \xi_g \norm{\w_g}{2} 
	\end{align}	}	
where the last inequality follows from $\bbeta ^1  = \0$.
\end{IEEEproof}

The RHS of \eqref{eq:singleiter} consists of two terms.
If we keep $\rho < 1$, the first term approaches zero fast, and the second term determines the bound. 
In the following, we show that for specific choices of step sizes $\mu_g$s we can keep $\rho < 1$ with high probability and the second term can be upper bounded using the analysis of Section \ref{sec:error}.
More specifically, the first term corresponds to the optimization error which shrinks in every iteration while the second term is of the same order of the upper bound of the statistical error characterized in Theorem \ref{theo:calcub}.

One way for having $\rho < 1$ is to keep all arguments of $\max(\cdots)$  defining $\rho$ strictly below $1$. %, i.e., $\rho_g < 1/\theta_f $.
%To this end, we first establish high probability upper bound for $\rho_g$, $\eta_g$, and $\phi_g$ (in the Section \ref{twolems} of Supplement) and then show that with enough number of samples and proper step sizes $\mu_g$, $\rho$ can be kept strictly below one with high probability. %In Section~\ref{sec:expds}, we empirically illustrate such geometric convergence.
The high probability bounds for constants $\rho_g$, $\eta_g$, and $\phi_g$ provided in Lemma \ref{lemm:hpub} and the deterministic bound of Lemma \ref{theo:iter} leads to the following theorem which shows that for enough number of samples, of the same order as the statistical sample complexity of  Theorem \ref{theo:re}, we can keep $\rho$ below one and have geometric convergence.

%The following theorems uses Lemmas \ref{lemm:hpub} and \ref{lemm:mainlem} and shows for enough number of samples we can keep $\theta_f \rho$ below one.
%Bounds on $\rho_g\left(\frac{1}{n_g}\right)$ and $\eta_g\left(\frac{1}{n_g}\right)$ suggest that with enough number of samples, learning rate of $\mu_g = \frac{1}{n_g}$ leads to a linear rate of convergence to a constant times the statistical error bound in \ref{eq:singleiter}.
%The following theorem elaborates the result.
%Using Lemma \ref{lemm:hpub} and \ref{lemm:mainlem} bounds
%the following theorem shows that for a specific choice of step-sizes as $\mu_g = \frac{1}{n_g}$, $\rho_{\max} < 1$ with high probability and error of each iteration $\norm{\ddelta^{(t+1)}}{2}$ reaches to a scaled upper bound of statistical error $\norm{\ddelta}{2}$ exponentially fast.
\begin{theorem}
	\label{theo:step}		
	Let $\tau = \sqrt{\log(G+1)}/\zeta + \epsilon$ for $\epsilon, \zeta > 0$. For the step sizes:
	\be
	\nr
	\mu_0 \leq \frac{\min_{g \in [G]} h_g(\tau)^{-1}}{2 n} ,
	\forall g \in [G]: \mu_g \leq  \frac{h_g(\tau)^{-1}}{2\sqrt{n n_g}} 
	\ee
	where $h_g(\tau) = \left(1 + c_{0g} \frac{1/2[\omega(\cA_g) + \omega(\cA_0)] + \tau}{\sqrt{n_g}}\right)$
	and sample complexities of $\forall g \in [G_+]: n_g \geq C_g (\omega(\cA_g) + \tau/2)^2$,
	with probability at least $ 1 - \sigma \exp(- \min(\nu \min_{g \in [G]} n_g - \log(G+1), \zeta \epsilon^2) )$ updates of Algorithm \ref{alg2} obey the following:	
	\begin{align}
	\nr
	&\sum_{g=0}^{G} \sqrt{\frac{n_g}{n}} \norm{\ddelta_g^{(t+1)}}{2}
	\leq r(\tau)^t \sum_{g=0}^{G} \sqrt{\frac{n_g}{n}} \norm{\bbeta^*_g}{2}   
%	+ \frac{(G+1) \sqrt{(2K^2 + 1)}}{\sqrt{n} (1 - r(\tau))}  \left(\zeta k \max_{g \in [G]} \omega(\cA_g) + \tau \right),
	\\ \nr 
	&\hspace*{1.5cm}+ \frac{C(G+1)\sqrt{(2k_w^2 + 1)k_x^2}}{\sqrt{n}(1 - r(\tau))} \left(\max_{g \in [G_+]} \omega(\cA_g) + \tau \right)
	\end{align}
	where 
	{\small$$r(\tau) \triangleq \max\left(\tilde{\rho}_0 + \sum_{g=1}^{G} \sqrt{\frac{n_g}{n}} \tphi, \max_{g \in [G]} \left[\trho + \sqrt{\frac{n}{n_g}}  \frac{\mu_0}{\mu_g} \tphi \right]  \right) < 1,$$} is a constant depending on $\tau$ and $\upsilon, \zeta$, and $\sigma$ are constants.
	
\end{theorem}

\begin{corollary}
	\label{corr:show}
	For enough number of samples, iterations of \dc\ algorithm with step sizes $\mu_0 = \Theta(\frac{1}{n})$ and $\mu_g =  \Theta(\frac{1}{\sqrt{n n_g}})$ geometrically converges to the following with high probability:
	{\small\beq
	\label{eq:scaled}
	\sum_{g=0}^{G} \sqrt{\frac{n_g}{n}} \norm{\ddelta_g^{\infty}}{2}
	\leq c \frac{\max_{g \in [G_+]} \omega(\cA_g) + \sqrt{\log (G+1)}/\zeta +  \theta}{\sqrt{n} (1 - r(\tau))}
	\eeq}
	where $c = C(G+1)\sqrt{(2k_w^2 + 1)k_x^2}$. 
\end{corollary}
	It is instructive to compare RHS of \eqref{eq:scaled} with that of \eqref{eq:general}: $\kappa_{\min}$ defined in Theorem \ref{theo:re} corresponds to $(1 - r(\tau))$ % defined in Theorem \ref{theo:step}
	and the extra $G+1$ factor corresponds to the sample condition number $\gamma = \max_{g \in [G] } \frac{n}{n_g}$.
	Therefore, Corollary \ref{corr:show} shows that with the number of samples in the order of sample complexity determined in Theorem \ref{theo:re} \dc{} converges to the statistical error bound determined in Theorem \ref{theo:calcub}.
	
\subsection{Proof Sketch of Theorem \ref{theo:step}}
\label{proofsketch}
	We want to determine $r(\tau)$ such that $\rho(\mmu) < r(\tau) < 1$ with high probability. Here, we provide a proof sketch using the probabilistic bounds on constants $\rho_g$, $\eta_g$, and $\phi_g$ shown in Lemma \ref{lemm:hpub} while leaving out the trivial but tedious computation of the exact high probability provided in Theorem \ref{theo:step}.

	To have $\rho < 1$ in the deterministic bound of Lemma \ref{theo:iter} with the step sizes suggested in Theorem \ref{theo:step}, we need to find the number of samples which satisfy the following conditions:
	
	\begin{itemize}
		\item Condition 1: $\rho_0\left(\mu_0\right) + \sum_{g=1}^{G} \sqrt{\frac{n_g}{n}} \phi_g\left(\mu_g\right) < 1$
		\item Condition 2: $\forall g \in [G]: \rho_g\left(\mu_g\right) + \sqrt{\frac{n}{n_g}} \frac{\mu_0}{\mu_g}\phi_g\left(\mu_g\right) < 1$
	\end{itemize}	
	For Condition 1, from Lemma \ref{lemm:hpub} with high probability, we have the below upper bound for the summation of $\phi_g$s for the given step sizes of $\forall g \in [G]: \mu_g \leq \left(1 + c_{0g}\frac{\omega_{0g} + \tau}{\sqrt{n_g}} \right)^{-1}/2\sqrt{n n_g}$:
	\begin{align}	 
	\nr 
	\sum_{g=1}^{G} \sqrt{\frac{n_g}{n}} \phi_g\left(\mu_g\right) 
	&\leq \sum_{g=1}^{G} \sqrt{\frac{n_g}{n}} \mu_g n_g \left(1 + c_{0g}\frac{\omega_{0g} + \tau}{\sqrt{n_g}} \right)
	\\ \nr 
	&\leq \frac{1}{2} \sum_{g=1}^{G} \frac{n_g}{n} = \frac{1}{2}
	\end{align}
	Therefore, Condition 1 reduces to $\rho_0\left(\mu_0\right) \leq 1/2$, which is satisfied with high probability if we have enough \textit{total} number of samples as shown below. Replacing high probability upper bound of $\rho_0$ from Lemma \ref{lemm:hpub}, we have the condition as:	
	\begin{align}	 
	\nr 
	\rho_0\left(\mu_0\right)
	&\leq  \frac{1}{2} \left[1 - \mu_0 n \left(1 - \sqrt{2} c_0\frac{2 \omega_0 + \tau}{ \sqrt{n}} \right)\right] \leq \frac{1}{2}
	\end{align}
	For $n > 8 c_0^2 (\omega(\cA_0) + \tau/2)^2$ the term in parenthesis is positive and the inequality is satisfied for all $\mu_0 \geq 0$. 
		
	For Condition 2, we plug in upper bounds of Lemma \ref{lemm:hpub} and get the following high probability upper bound: 
	\begin{align}	
	\nr 
	&\rho_g\left(\mu_g\right) +  \sqrt{\frac{n}{n_g}} \frac{\mu_0}{\mu_g}\phi_g\left( \mu_g \right)
	\\ \nr
	&\hspace*{75pt}\leq	 \frac{1}{2} \left[1 - \mu_g n_g \left(1 - \sqrt{2} c_g\frac{2 \omega_g + \tau}{ \sqrt{n_g}} \right)\right]  
	\\ \nr 
	&\hspace*{75pt}+  \sqrt{nn_g}\mu_0 \left(1 + c_{0g}\frac{\omega_{0g} + \tau}{\sqrt{n_g}} \right)
	\leq 1
	\end{align} 
	Thus, Condition 2 becomes: 		
	\begin{align}	
	\nr 
	\sqrt{2} c_g\frac{2 \omega_g + \tau}{ \sqrt{n_g}}
	\leq 1 + \frac{1}{\mu_g n_g} -  2\sqrt{\frac{n}{n_g}}\frac{\mu_0}{\mu_g} \left(1 + c_{0g}\frac{\omega_{0g} + \tau}{\sqrt{n_g}} \right)
	\end{align} 
	Note that using this condition, we want to determine the sample complexity for each group $g$, i.e., lower bounding $n_g$, for the given step sizes. Therefore, we can replace the lower bound for $1/\mu_g > 2\sqrt{n n_g} \left(1 + c_{0g}\frac{\omega_{0g} + \tau}{\sqrt{n_g}} \right)$ in the second term of the RHS and get the modified condition as:
	\begin{align}	
	\nr 
	\sqrt{2} c_g\frac{2 \omega_g + \tau}{ \sqrt{n_g}}
	\leq 1 + 2\sqrt{\frac{n}{n_g}} \left(1 + c_{0g}\frac{\omega_{0g} + \tau}{\sqrt{n_g}} \right) \left[1 - \frac{\mu_0}{\mu_g}\right]
	\end{align} 
	Because of the definition of $\mu_0$, we have $\forall g: \mu_0 \leq \mu_g$ and therefore the second term is positive for all $g$s which reduces Condition 2 to $\sqrt{2} c_g\frac{2 \omega_g + \tau}{ \sqrt{n_g}} \leq 1$ that lead to the sample complexity of $n_g > 8c^2_g (\omega(\cA_g) + \tau/2)^2$, which completes the proof.
	\hfill {\qedsymbol}
\begin{figure}[t!]
	\centering
	\begin{subfigure}[b]{0.2175\textwidth}
		\includegraphics[width=\textwidth]{./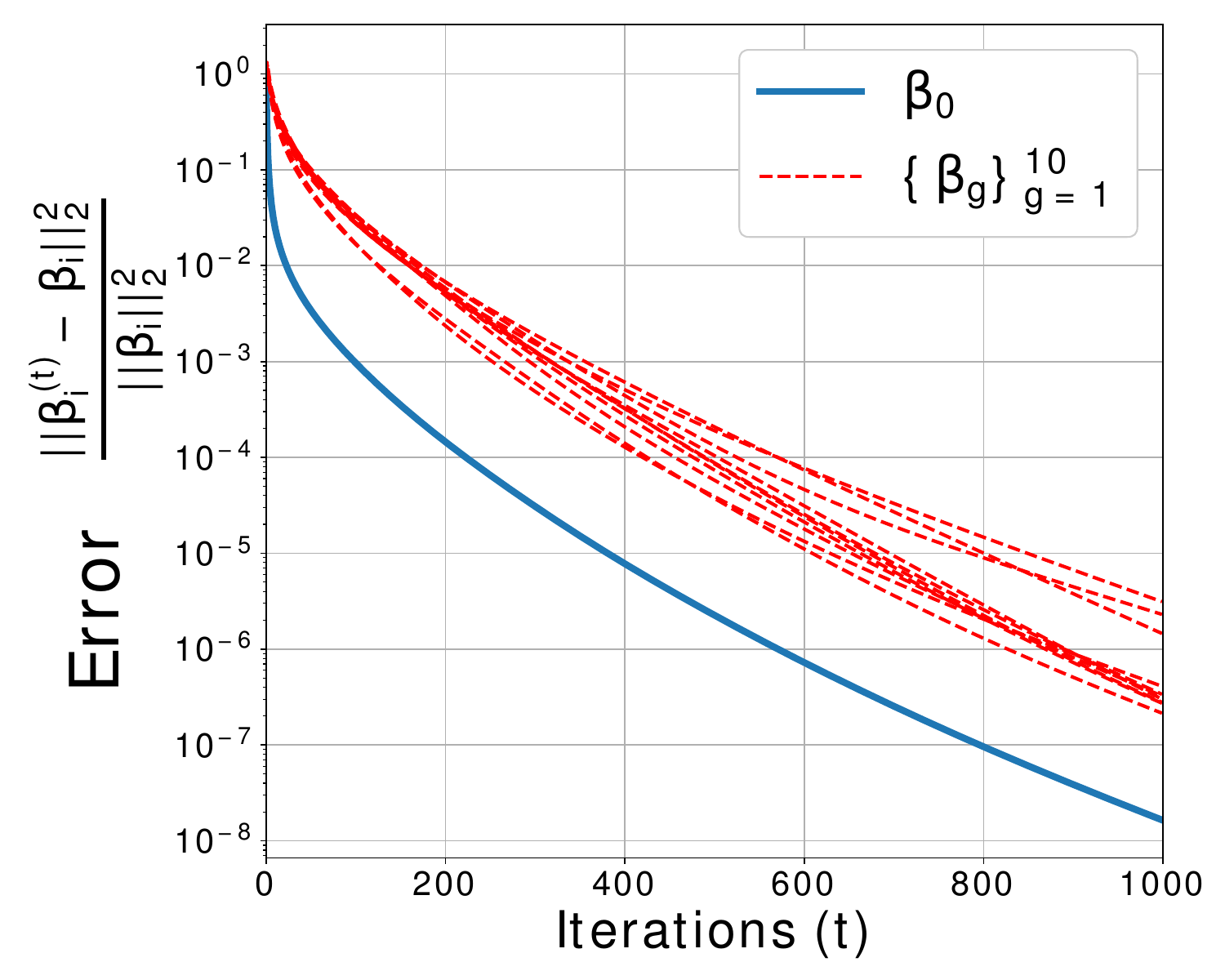}				
		\caption{$n_g = 60$, $\w = 0$}\label{fig syn1a}
	\end{subfigure} ~
	\begin{subfigure}[b]{0.2175\textwidth}
		\includegraphics[width=\textwidth]{./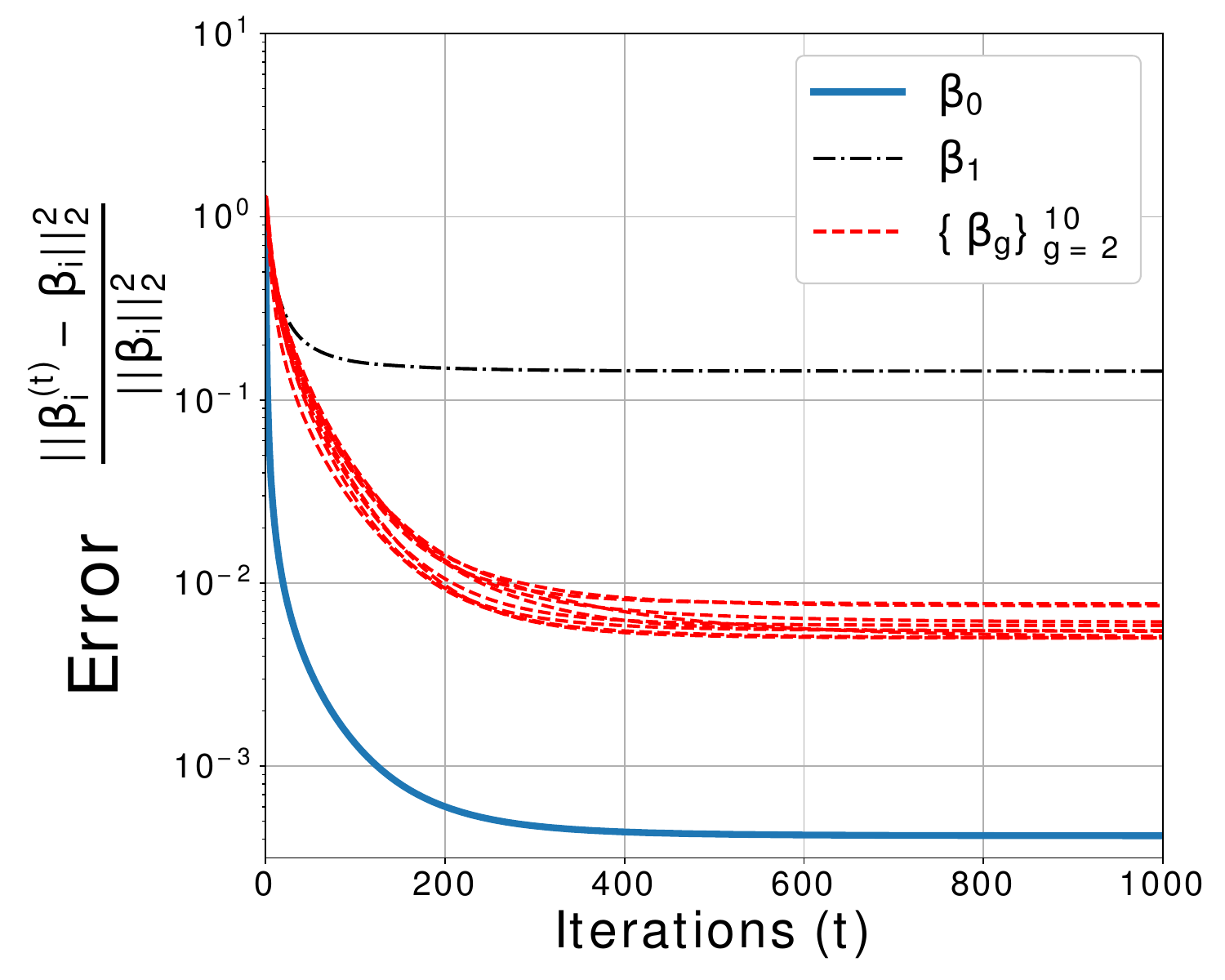}				
		\caption{$n_g = 60$, $\w_1 \neq 0$}\label{fig syn1b}
	\end{subfigure}
	%			\label{fig syn1}
	\begin{subfigure}[b]{0.2175\textwidth}
		\includegraphics[width=\textwidth]{./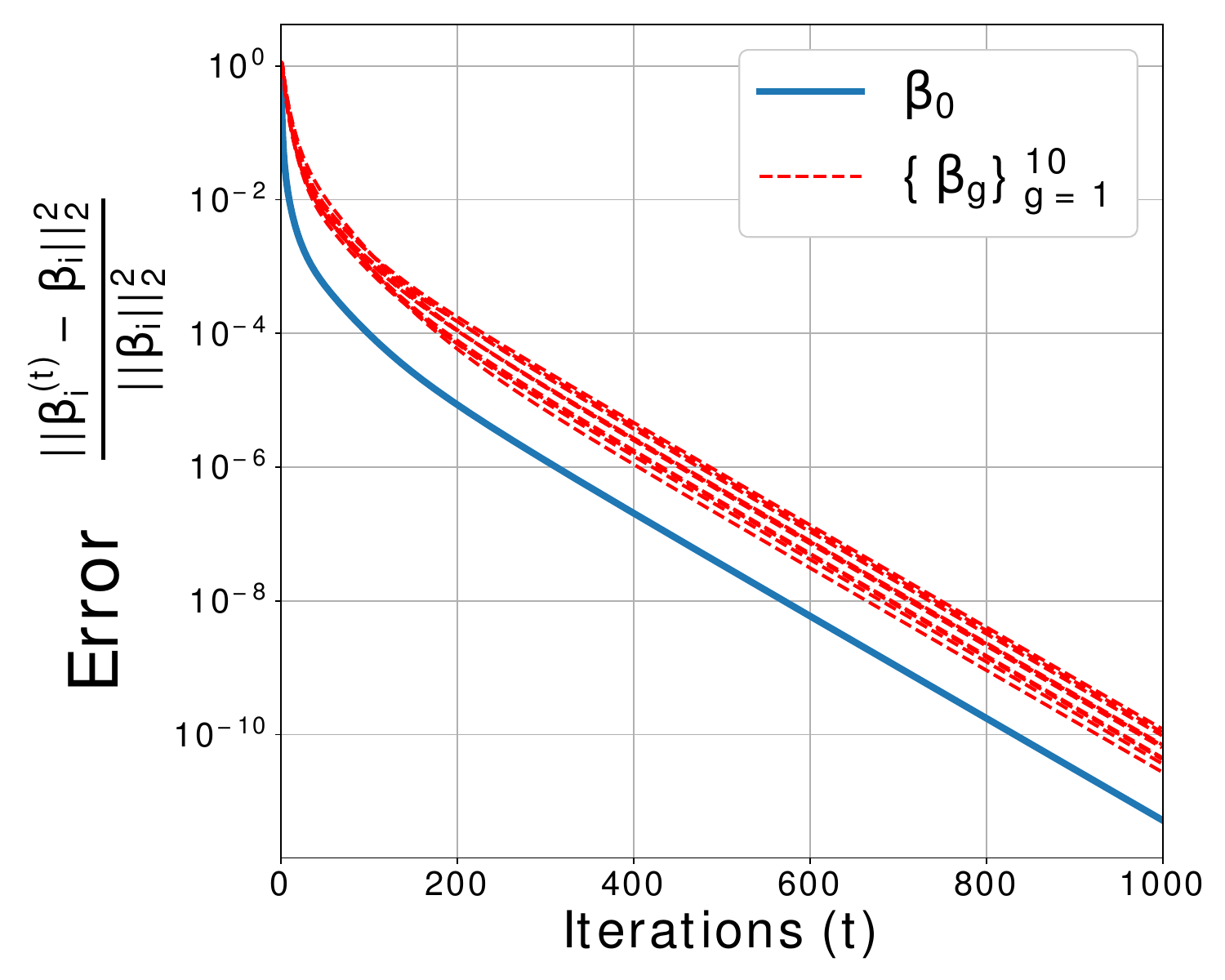}
		\caption{$n_g = 150$, $\w = 0$} \label{fig syn2a}
	\end{subfigure} ~
	\begin{subfigure}[b]{0.2175\textwidth}
		\includegraphics[width=\textwidth]{./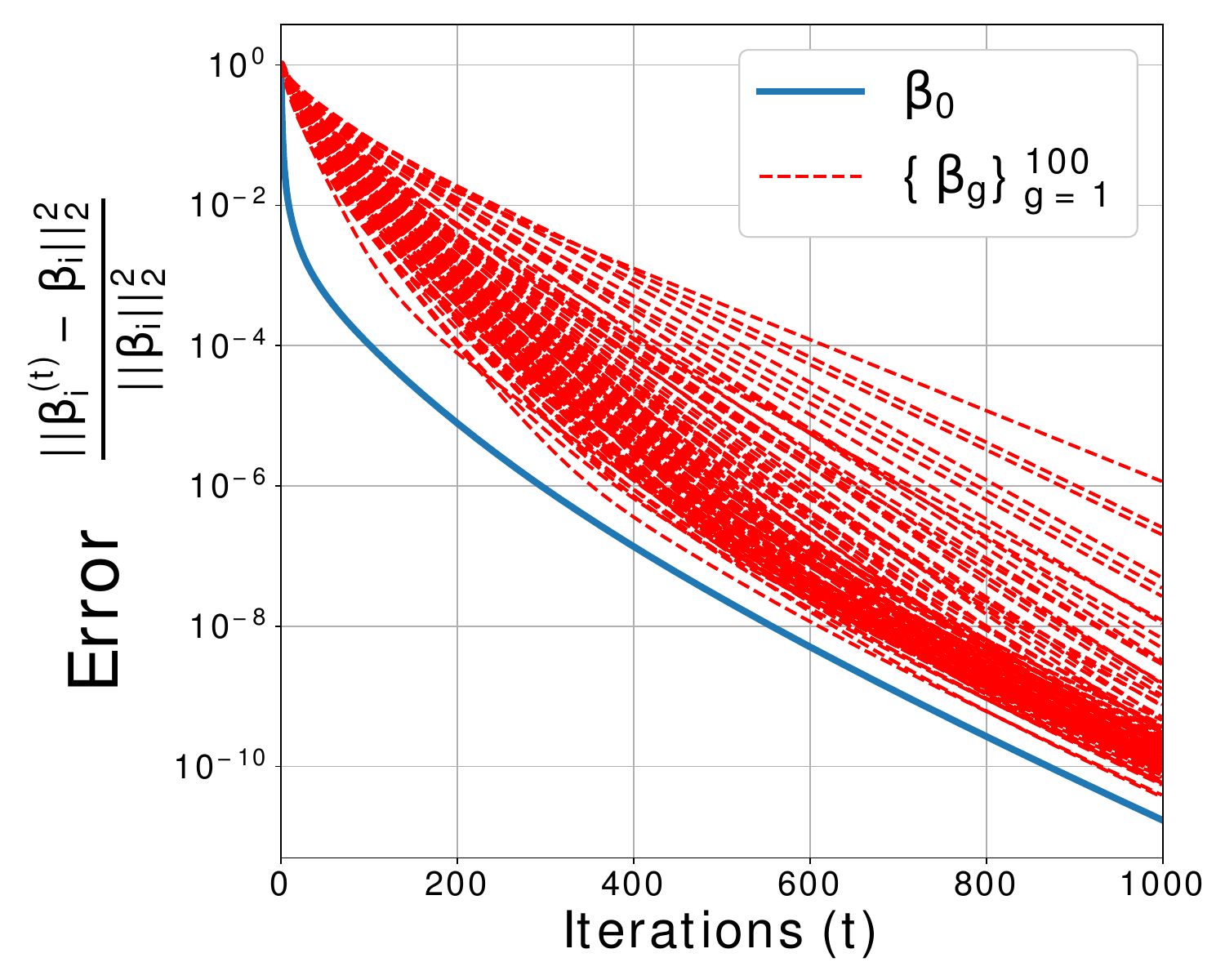}
		\caption{$n_g = 150$, $\w = 0$}\label{fig syn2b}
	\end{subfigure}
%	\squeezeup
	\caption{\small In (a), (b), and (c)  experiments $p = 100$, $G = 10$, $\forall g \in [G]: s_g = 10$, and $s_0 = p$. For (d) $p = 1000$, $G = 100$, $\forall g \in [G]: s_g = 10$, and $s_0 = 100$. (a) Noiseless fast convergence. (b) Noise on the first group does not impact other groups as much. (c) Increasing sample size improves rate of convergence. (d) \dc\ convergences fast even with a large number of groups $G=100$.}
	\label{fig syn12}
\end{figure}
\section{Experiments on Synthetic Data}
\label{sec:expds}
We considered sparsity based simulations with varying $G$ and sparsity levels. In our first set of simulations, we set $p=100$, $G=10$ and sparsity of the individual parameters to be $s=10$. We generated a dense $\bbeta_0$ with $\|\bbeta_0\|=p$ and did not impose any constraint. Iterates $\{\bbeta^{(t)}_g\}_{g=1}^G$ are obtained by projection onto the $\ell_1$ ball $\|\bbeta_g\|_1$. Nonzero entries of $\bbeta_g$ are generated with ${\cal{N}}(0,1)$ and nonzero supports are picked uniformly at random. Inspired from our theoretical step size choices, in all experiments, we used simplified learning rates of $\frac{1}{n}$ for $\bbeta_0$ and $\frac{1}{\sqrt{nn_g}}$ for $\bbeta_g$, $g \in [G]$. Observe that, cones of the individual parameters intersect with that of $\bbeta_0$ hence this setup actually violates \ds\ (which requires an arbitrarily small constant fraction of groups to be non-intersecting). Our intuition is that the individual parameters are mostly incoherent with each other and the existence of a nonzero perturbation over $\bbeta_g$'s that keeps all measurements intact is unlikely. Remarkably, experimental results still show successful learning of all parameters from small amount of samples. We picked $n_g=60$ for each group. Hence, in total, we have $11p=1100$ unknowns, $200=G\times 10+100$ degrees of freedom and $G\times 60=600$ samples. In all figures, we study the normalized squared error $\frac{\|\bbeta^{(t)}_g-\bbeta_g\|_2^2}{\|\bbeta_g\|_2^2}$ and average $10$ independent realization for each curve. Fig. \ref{fig syn1a} shows the estimation performance as a function of iteration number $t$. While each group might behave slightly different, we do observe that all parameters are linear converging to ground truth.% is evident from the linear slope of the $y$-axis.
	
In Fig. \ref{fig syn1b}, we test the noise robustness of our algorithm. We add a ${\cal{N}}(0,1)$ noise to the $n_1=60$ measurements of the first group \emph{only}. The other groups are left untouched. While all parameters suffer nonzero estimation error, we observe that, the global parameter $\bbeta_0$ and noise-free groups $\{\bbeta_g\}_{g=2}^G$ have substantially less estimation error. This implies that noise in one group mostly affects itself rather than the global estimation. In Fig. \ref{fig syn2a}, we increased the sample size to $n_g=150$ per group. We observe that, in comparison to Fig. \ref{fig syn1a}, rate of convergence receives a boost from the additional samples as predicted by our theory.

Finally, Fig. \ref{fig syn2b} considers a very high-dimensional problem where $p=1000$, $G=100$, individual parameters are $10$ sparse, $\bbeta_0$ is $100$ sparse and $n_g=150$. The total degrees of freedom is $1100$, number of unknowns are $101000$ and total number of datapoints are $150\times 100=15000$. While individual parameters have substantial variation in terms of convergence rate, at the end of $1000$ iteration, all parameters have relative reconstruction error below $10^{-6}$.

\section{Conclusion}
We presented an estimator for the joint estimation of common and individual parameters of the data sharing model. 
We showed that the sample complexity for estimation of the shared parameter depends on the total number of sample $n$.
In addition, the shared parameter error rate decays as $1/\sqrt{n}$. 
These results indicate that our estimator benefits from the pooled data in estimating the common parameters. 
Both sample complexity and upper bound of error depend on the \emph{maximum} Gaussian width among the spherical caps induced by the error cones of all parameters.
We provided a projected gradient descent algorithm for estimation of the parameters and analyzed its convergence rate and showed geometric convergence for a carefully selected step size. Finally, we complemented the theoretical results presented with simulation results.

%\appendices
%\section{Proof of the First Zonklar Equation}
%Appendix one text goes here.
%
%% you can choose not to have a title for an appendix
%% if you want by leaving the argument blank
%\section{}
%Appendix two text goes here.
%
%
%% use section* for acknowledgment
\section*{Acknowledgment}
The research was supported in part by NSF grants OAC-1934634, IIS-1908104, IIS-1563950, IIS-1447566, IIS-1447574, IIS-1422557, and CCF-1451986. Samet Oymak is partially supported by the NSF award CNS-1932254. Amir Asiaee and Kevin Coombes would like to thank the Mathematical Biosciences Institute (MBI) at Ohio State University, for partially supporting this research through NSF grants DMS 1440386 and DMS 1757423.

%The authors would like to thank...

\ifCLASSOPTIONcaptionsoff
  \newpage
\fi

\bibliographystyle{IEEEtran}
\bibliography{IEEEabrv,nips18}

%\newpage 
%\onecolumn
%\input{sec/proofs}

%\begin{thebibliography}{1}
%
%\bibitem{IEEEhowto:kopka}
%H.~Kopka and P.~W. Daly, \emph{A Guide to \LaTeX}, 3rd~ed.\hskip 1em plus
%  0.5em minus 0.4em\relax Harlow, England: Addison-Wesley, 1999.
%
%\end{thebibliography}
%
%\begin{IEEEbiography}{Michael Shell}
%Biography text here.
%\end{IEEEbiography}
%
%% if you will not have a photo at all:
%\begin{IEEEbiographynophoto}{John Doe}
%Biography text here.
%\end{IEEEbiographynophoto}
%
%
%\begin{IEEEbiographynophoto}{Jane Doe}
%Biography text here.
%\end{IEEEbiographynophoto}

\end{document}